\def\eqref#1{equation~\ref{#1}}
\def\1{\bm{1}}
\DeclareMathAlphabet{\mathsfit}{\encodingdefault}{\sfdefault}{m}{sl}
\SetMathAlphabet{\mathsfit}{bold}{\encodingdefault}{\sfdefault}{bx}{n}
\DeclareMathOperator*{\argmax}{arg\,max}
\let\@fnsymbol\@
\title{Fourier Transporter: \\ Bi-Equivariant Robotic Manipulation in 3D}
\author{Haojie Huang, Owen L. Howell\textsuperscript{$\star$}, Dian Wang\textsuperscript{$\star$},
Xupeng Zhu\textsuperscript{$\star$},
Robert Platt\textsuperscript{$\dagger$}, Robin Walters\textsuperscript{$\dagger$}\\
$\star$ Equal Contribution, $\dagger$ Equally Advising\\
Northeastern University, Boston, MA 02115, USA\\
\scriptsize \texttt{\{huang.haoj; howell.o; wang.dian; zhu.xup; r.walters; r.platt\}@northeastern.edu }
}
\newcommand{\SE}{\mathrm{SE}}
\newcommand{\SO}{\mathrm{SO}}
\newcommand{\pick}{\mathrm{pick}}
\newcommand{\place}{\mathrm{place}}
\newcommand{\ours}{\textsc{FourTran}}
\newcommand{\irrep}{\mathrm{irrep}}
\newcommand{\Rot}{\mathrm{Rot}}
\newcommand{\regu}{\mathrm{reg}}
\newcommand{\Ind}{\mathrm{Ind}}
\newcommand{\eref}{Equation~\ref}
\newtheorem{lemma}{Lemma}
\newtheorem{proof}{proof}
\newtheorem{proposition}{Proposition}
\newcommand\blfootnote[1]{%
  \begingroup
  \renewcommand\thefootnote{}\footnote{#1}%
  \addtocounter{footnote}{-1}%
  \endgroup
}
\begin{document}
\maketitle

\begin{abstract}
Many complex robotic manipulation tasks can be decomposed as a sequence of pick and place actions. Training a robotic agent to learn this sequence over many different starting conditions typically requires many iterations or demonstrations, especially in 3D environments. In this work, we propose Fourier Transporter (\ours{}), which leverages the two-fold $\SE(d)\times\SE(d)$  symmetry in the pick-place problem to achieve much higher sample efficiency. \ours{} is an open-loop behavior cloning method trained using expert demonstrations to predict pick-place actions on new configurations. \ours{} is constrained by the symmetries of the pick and place actions independently. Our method utilizes a fiber space Fourier transformation that allows for memory-efficient computation. Tests on the RLbench benchmark achieve state-of-the-art results across various tasks.\blfootnote{Project website: \href{https://haojhuang.github.io/fourtran_page/}{https://haojhuang.github.io/fourtran\_page}}
%Lastly, we perform a set of ablation studies that show that modifications to our architecture that reduce the bi-equivarience constraints results in a degradation in performance.
\end{abstract}

%Equivariance provides critical constraints to building deep neural networks to leverage the symmetries underlying the tasks. For tasks that infer the relative pose between coupled objects,  a transformation on either side changes the desired prediction accordingly. In this work, we propose Bi-Equivariant Fourier Transporter Network to leverage the coupled symmetry of the pick-place problem in robotics. It generalizes both 2D and 3D manipulation pick-place problems and demonstrates a strong inductive bias on the sample efficiency of robot learning and the success rate on different tasks.

\section{Introduction}
% unconditioned equivariance and condition equivariance
%% a figure to show the unconditioned vs conditioned
% Robotics pick and place problem
% the benefits of going to fourier space
%Incorporating equivariant constraints into neural networks is a beneficial guidance to leverage the symmetries underlying the tasks.

%%MOTIVATE IMITATION LEARNING PROBLEM -- NEED SAMPLE EFFICIENCY 
Imitation learning for manipulation tasks in $\SE(3)$ has emerged as a key topic in robotic learning. Imitation learning is attractive because of its practicality. In contrast to reinforcement learning wherein training happens via a period of autonomous interaction with the environment (which can potentially damage the robot and the environment), imitation learning requires only human demonstrations of the task to be performed which is often safer and easier to provide. Sample efficiency is critical here; the robot needs to learn to perform a task without requiring the human to provide an undue number of demonstrations. Unfortunately, many current state-of-the-art methods are not sample efficient. For example, even after training with one hundred demonstrations, methods like PerAct and RVT still struggle to solve standard RLBench tasks like \textsc{Stack Wine} or \textsc{Stack Cups}~\citep{shridhar2023perceiver,goyal2023rvt}. 

\begin{figure}[b!]
     \centering
     \begin{subfigure}[b]{0.45\textwidth}
         \centering
         \includegraphics[height=4.9cm]{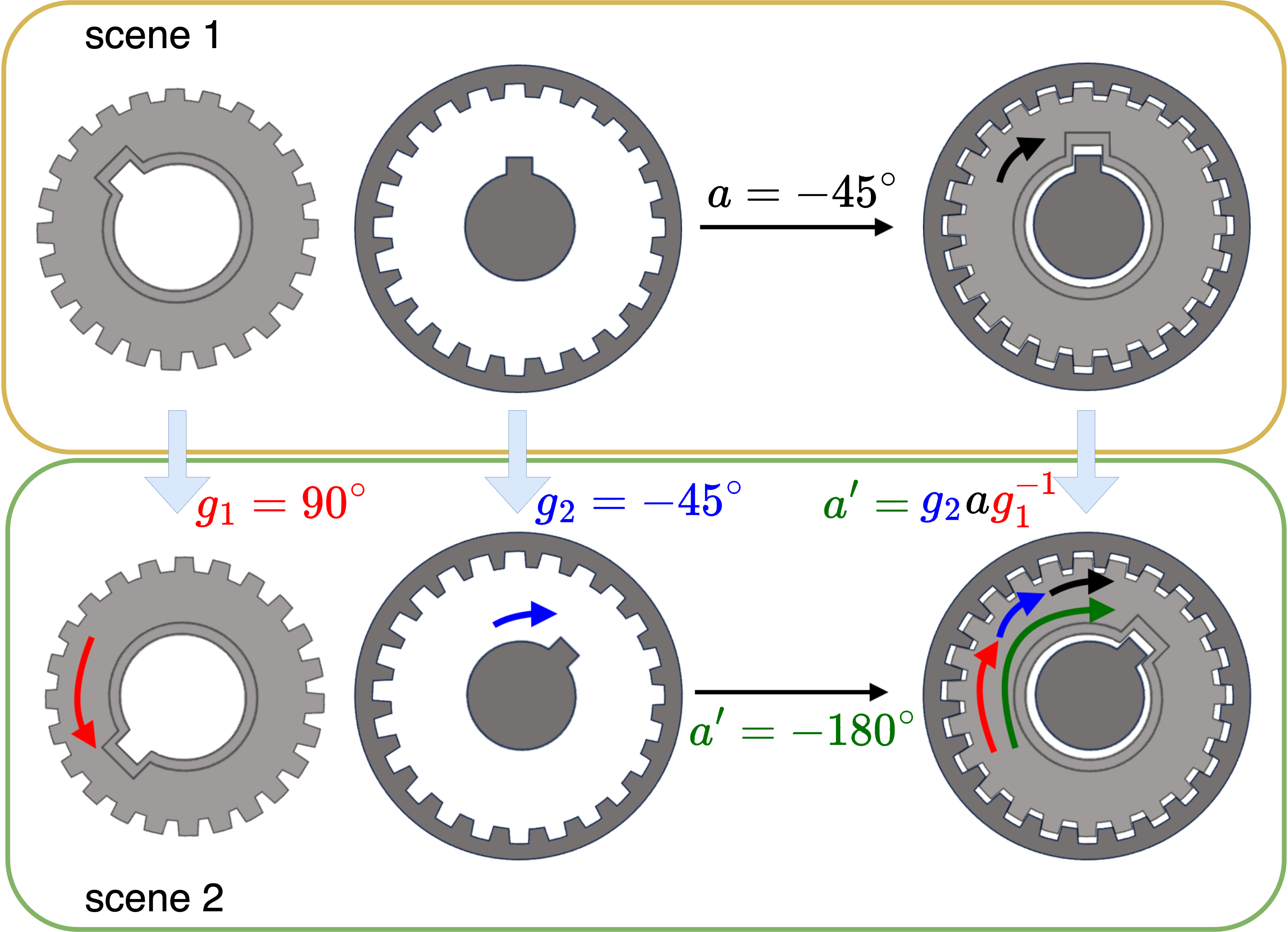}
         \caption{2D Bi-Equivariance}
     \end{subfigure}\hfill
     \begin{subfigure}[b]{0.45\textwidth}
         \centering
         \includegraphics[height=4.9cm]{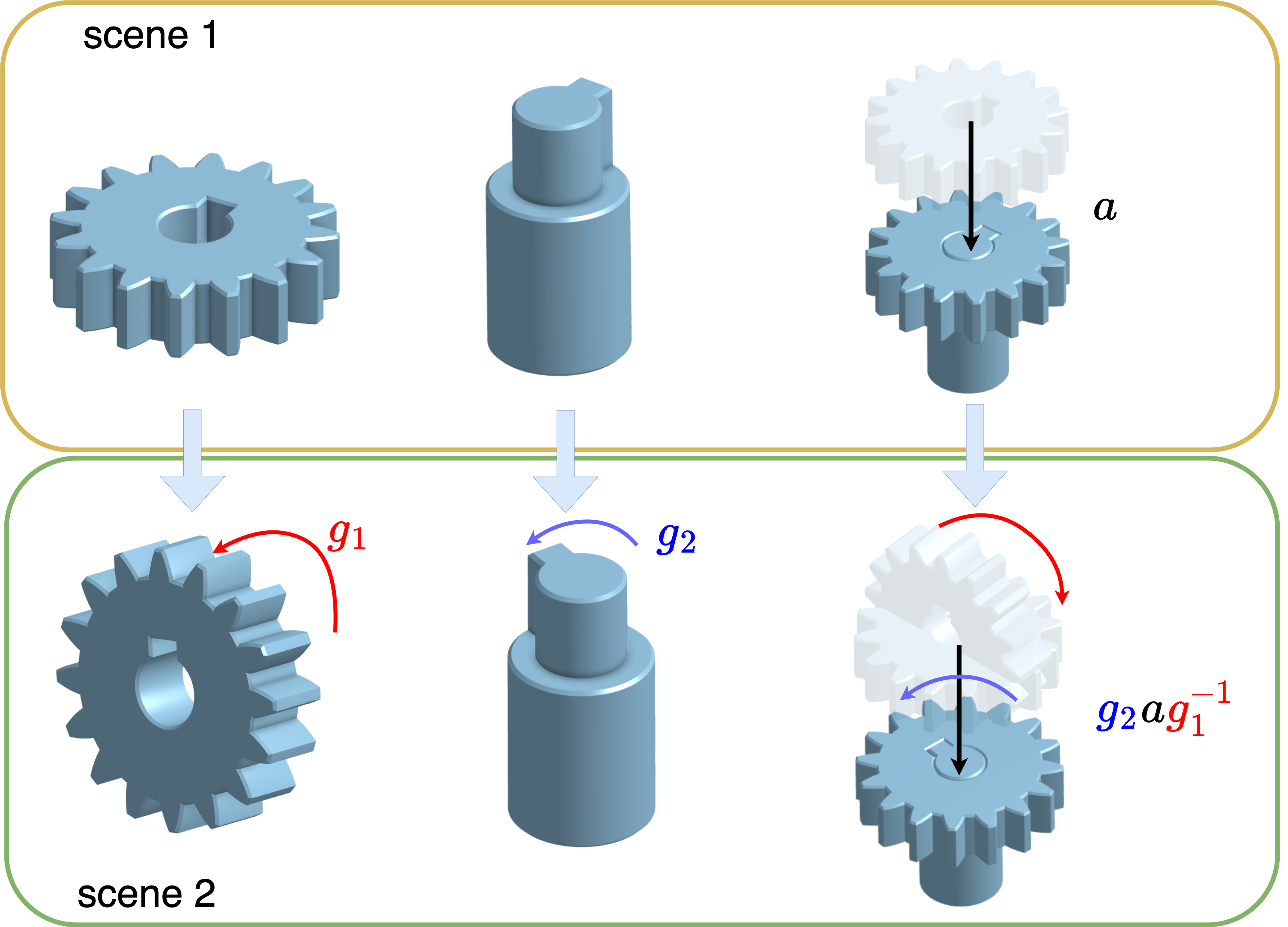}
         \caption{3D Bi-Equivariance}
     \end{subfigure}
     \hspace{0.5cm}
     \caption{
     %(Xupeng: I wrote two captions here. let's choose one.) \\
     Illustration of bi-equivariance in 2D (left) and 3D (right). The place action, $a'=g_2ag_1^{-1}$, is symmetric with respect to both the orientation of the object to be picked, $g_1$, and the orientation of the place target, $g_2$.
     % \ours{} leverages bi-equivairant symmetry in 2D and 3D. Bi-equivariance reflects two facts. When the picked object is transformed by $g_1$, the new place action $a'$ compensates the old $a$ by $a' = ag_1^{-1}$. Moreover, when the target place location is transformed by $g_2$, the place action $a'$ transforms accordingly $a'=g_2a$. \\
     %\ours{} leverages bi-equivairant symmetry in 2D and 3D. Bi-equivariance reflects the fact that when the picked object is transformed by $g_1$ and the target place location is transformed by $g_2$, then the new place action $a'$ should compensate the old $a$ by $a' = g_2ag_1^{-1}$.
     }
     \label{bi_symmetry_des}
        %\vspace{-1cm}
\end{figure}

% Why are 3D pick and place tasks significantly harder than the 2D case? A key reason is that these 3d manipulations (i.e. the RLBench tasks) have $\SE(3)$ action spaces, i.e. the full 6-DOF space of positions and orientations of the objects and the robot. This space is very high dimensional.  

Why is sample efficiency such a challenge in three dimensions? A big reason is that these tasks are defined over $\SE(3)$ action spaces where the policy must output both a 3D position and $\SO(3)$ orientation. The orientation component here is a particular challenge for robot learning because $\SO(3)$ is not Euclidean and standard convolutional layers, which lack geodesic properties, are not well-adapted for $\SO(3)$ convolutions. Discretization of the group $SO(3)$ is also difficult. For example, a grid sampling of 1000 different orientations in $\SO(3)$ still only realizes an angular resolution of $36^{\degree}$, which is insufficient for many manipulation tasks. Instead, existing methods, e.g.~\cite{shridhar2023perceiver} or \cite{goyal2023rvt}, often fall back on generic self-attention layers that do not take advantage of the geometry of $\SO(3)$. 

%%[RW: This is a bit long-winded.  I think it boils down to SE(2) has dim = 2 abd SE(3) has dim 6.  
%Three dimensional manipulation tasks are significantly more difficult than two dimensional manipulation tasks due to the much larger dimensionality of three dimensional space. This can be seen by considering the following example: Consider the task of assembling the gear shown in Figure~\ref{bi_symmetry_des}. In the picking sub-task, the robot first needs to detect a task-appropriate picking pose. After the gear is picked up successfully, the robot needs to infer the action that transforms the gear to match the slot. Explicitly, the place action depends on the current configurations of the gear and the slot. Assuming the orientation along each axis is discretized to $O(n)$ bins, there will be $O(n^{2})$ combinations of orientations of the gear and the slot in the planar place setting. When it comes to the 3D placing problem, the number of such combinations explodes to $O(n^{6})$. The $\mathcal{O}(n^{d})$ scaling of the number of combinations in $d$-dimensions makes three dimensional grasping much more difficult then two dimensional grasping.

One approach to this problem is to leverage neural-network policy models that are symmetric in $\SO(2)$ or $\SO(3)$. This has been explored primarily in $\SO(2)$ and $\SE(2)$ settings, e.g.~\cite{wang2021equivariant} and ~\cite{jia2023seil}, where significant gains in sample efficiency have been made using policy networks that incorporate steerable convolution kernels \citep{cohensteerable}. In $\SO(3)$, symmetric models have been mainly applied to pose or descriptor inference, e.g.~\cite{klee2023image} and~\cite{ryu2022equivariant}, but not directly to policy learning. Moreover, most works do not address the dual symmetry present in many pick and place problems, sometimes referred to as \emph{bi-equivariance} \citep{ryu2022equivariant}, where the pick-place action distribution transforms symmetrically (equivariantly) when a transformation is applied independently to either the pick or the place pose. This is illustrated in Figure~\ref{bi_symmetry_des}. Independent rotations of the gear ($g_1$) and the slot ($g_2$) result in a change ($a'=g_2 a g_1^{-1}$) in the requisite action needed to perform the insertion. While bi-equivariance in policy learning has been studied in $\SO(2)$~\citep{huang2023leveraging}, models for encoding $\SO(3)$ bi-equivariance in a general policy learning setting have not yet been developed.

This paper proposes \emph{Fourier Transporter} (\ours{}), an approach to modeling $\SE(3)$ bi-equivariance using 3D convolutions and a Fourier representation of rotations. Unlike existing methods, e.g. Equivariant Descriptor Fields~\citep{ryu2022equivariant} and TAX-Pose~\citep{pan2023tax}, our method encodes $\SO(3)$ bi-equivariance inside a general purpose policy learning model rather than relying on point descriptors, which often require sample and optimization during inference. Our key innovation is to parameterize action distributions over $\SO(3)$ in the Fourier domain as coefficients of Wigner $D$-matrix entries. We embed this representation within a 3D translational convolution, thereby enabling us to do convolutions directly  $\SE(3)$ without excessive computational cost and with minimal memory requirements. The end result is a policy learning model for imitation learning with high sample efficiency and high angular resolution that can outperform existing $\SE(3)$ methods by significant margins (Table~\ref{tab:3d_results}). Our contributions are: 
\begin{itemize}
    \item We analyze problems with bi-equivariant symmetry and provide a general theoretical solution to leverage the coupled symmetries. 
    \item We propose Fourier Transporter (\ours{}) for leveraging bi-equivariant structure in manipulation pick-place problems in 2D and 3D. 
    \item We achieve state-of-the-art performance on several RLbench tasks (\cite{james2019rlbench}). Specifically, \ours{} outperforms baselines by a margin of between six percent (\textsc{Stack-Wine}) and \emph{two-hundred} percent (\textsc{Stack-Cups}).
\end{itemize}

% \begin{figure}
%     \centering
%     \includegraphics[width=0.99\textwidth]{}
%     \caption{Caption}
%     \label{fig:enter-label}
% \end{figure}

\section{Related Work}
%% Dian's version %%
\textbf{Action-centric manipulation.} 
Traditionally, vision-based manipulation policies~\citep{zhu2014single,zeng2017multi,deng2020self,wen2022you}
often require pretrained vision models to conduct object detection, segmentation and pose estimation, and may struggle with deformable or granular objects. Action-centric manipulation associates each pixel, voxel, or point with a target position of the end-effector, providing an efficient framework that evaluates a large amount of action with a dense output map. Transporter Networks~\citep{zeng2021transporter,seita2021learning,shridhar2022cliport} combine action-centric representations with end-to-end learning, showing fast convergence speed and strong generalization ability. However, these end-to-end learning methods are often limited to 2D top-down settings and cannot efficiently or accurately solve 3D pick and place tasks.

Recent works have made great progress in extending action-centric manipulation to the full $\SE(3)$ action space. \cite{lin2023mira} synthesize different views and apply 2D Transporter Networks on those views to realize 3D pick-place. C2FARM~\citep{james2022coarse} represents the workspace with multi-resolution voxel grids and infers the translation in a coarse-to-fine fashion. Transformer based methods like PerAct~\citep{shridhar2023perceiver} first tokenize the voxel grids together with a language description of the task and learns a language-conditioned policy with Perceiver Transformer~\citep{jaegle2021perceiver}. RVT~\citep{goyal2023rvt} projects the 3D observation onto orthographic images and uses the generated dense feature map of each image to generate 3D actions. However, they all require a large number of expert demonstrations to learn even simple 3D pick and place skills. Compared with the previous works, our proposed architecture is an end-to-end method that demonstrates a significant increase in sample efficiency in both 2D and 3D pick-place tasks. 

\textbf{Symmetries and Robot Learning.}
Robot learning methods can benefit significantly by leveraging the underlying symmetry in the task. \citep{zeng2018robotic,Morrison-RSS-18} show the translational equivariance of Fully Convolutional Network can improve learning efficiency for manipulation tasks. Recent works explore the use of equivariant networks~\citep{cohen2016group,cohensteerable,weiler20183d,weiler2019general,cesa2021program} to embed 2D rotational symmetries, resulting in a dramatic improvement in sample efficiency. \cite{wang2021equivariant,zhu2022grasp} encode $\SO(2)$ rotational symmetry in manipulation policies and demonstrate better on-robot grasp learning. \citep{wang2022mathrm,wang2022robot,jia2023seil,nguyen2023equivariant} exploit $\SO(2)$ and $\mathrm{O}(2)$ symmetries to solve multi-step manipulation tasks with a closed-loop policies. \citep{Huang-RSS-22, huang2023leveraging} analyze the bi-equivariant symmetry of pick and place on the 2D rotation groups. However, they are limited to 2D action spaces.

Several works have attempted to utilize $\SO(3)$ symmetry in robot learning. Neural Descriptor Fields (NDF)~\citep{simeonov2022neural} uses Vector Neurons~\citep{deng2021vector} to generate $\SO(3)$-equivariant key point descriptors to define the object's pose for pick and place tasks. However, NDF and its variations~\citep{chun2023local,huang2023nift} require well-segmented
point clouds and pre-trained descriptor networks. TAX-Pose~\citep{pan2023tax} generates $\SO(3)$-invariant dense correspondences for pick-place tasks, but is limited to reasoning over two objects. Equivariant Descriptor Field~\citep{ryu2022equivariant} achieves bi-equivariance by encoding the $\SO(3)$-equivariant point features with Tensor Field Network~\citep{thomas2018tensor} and $\SE(3)$ Transformer~\citep{fuchs2020se}. However, it requires many samples in $\SE(3)$ to train and test the energy-based model. In comparison, our proposed method generalizes bi-equvariance to both 2D and 3D manipulation pick-and-place problems, infers the pick-and-place distribution over the entire action space with a single pass, and utilizes convolution in Fourier space to improve computation efficiency.

\section{Background}

We provide some background on symmetry and groups, which are used in our method. Please see Appendix \ref{Appendix:Mathamatical_Background} for a more thorough mathematical introduction.

%A detailed description of the concepts can be found in Appendix~\ref{??}.

\textbf{Groups and Representations.}
In this work, we are primarily interested in the special Euclidean group $\SE(d)=\SO(d)\ltimes \mathbb{R}^d$ which includes all rotations and translations of $\mathbb{R}^{d}$. The discrete $\SO(2)$ subgroup $C_n = \{ \Rot_{\theta}: \theta =  2\pi i/n, 0\leq i < n \}$ contains rotations by angles which are multiples of $2\pi/n$. The icosahedral rotation group $I_{60}$ and octohedral rotation group $O_{24}$ are finite subgroups of the group $\SO(3)$ which give the orientation preserving symmetries of the icosohedron and octohedron and contain $60$ rotations and $24$ rotations, respectively. 

 %The icosahedral group is the group of symmetries of the rigid icosahedron. The icosahedral group is isomorphic to the alternating group of order five. The icosahedral group is the largest finite subgroup of $SO(3)$. 

 %The icosahedral group is the group of symmetries of the rigid icosahedron. The icosahedral group is isomorphic to the alternating group of order five. Unlike $SO(2)$, which has finite group $C_{n}$ approximations to arbitrary accuracy, the icosahedral group is the largest finite subgroup of $SO(3)$. When used as a proxy for $SO(3)$ the icosahedron allows for about $\frac{360}{20} = 18$ degree resolution.

An $n$-dimensional \emph{representation} $\rho \colon G \to \mathrm{GL}_n$ of a group $G$ assigns to each element $g \in G$ an invertible $n\!\times\! n$-matrix $\rho(g)$ where for all $g,g'\in G$ the matrices satisfy $\rho(gg') = \rho(g)\rho(g')$.
%Heuristically, a representation is an embedding of a group as a set of matrices. 
Different representations of $\SO(d)$ or its subgroups describe how different signals are transformed 
%in their fiber space
under rotations. We consider several examples. The \textit{trivial representation} $\rho_0 \colon \SO(d) \to \mathrm{GL}_1$ assigns $\rho_0(g) = 1$ for all $g \in G$, i.e. there is no transformation under rotation. \textit{The standard representation} $\rho_1$ of $\SO(d)$ assigns each group element its standard $d \times d$ rotation matrix. For finite groups $G$, \textit{the regular representation} $\rho_{\mathrm{reg}}$ acts on $\mathbb{R}^{|G|}$. Label a basis of $\mathbb{R}^{|G|}$ by $\{ e_g : g \in G\}$, then $\rho_{\mathrm{reg}}(g)(e_h) = e_{gh}$.  Both $\SO(2)$ and $\SO(3)$ have \textit{irreducible representations}, i.e. representations with no non-trivial fixed subspaces, indexed by non-negative integers $k \in \mathbb{Z}_{\geq 0}$.  The irreducible representations of $\SO(3)$ are known as Wigner D-matrices, have dimension $(2k+1) \times (2k+1)$ and are denoted $D^k$.  The irreducible representation $\rho_k$ of $\SO(2)$ is given by $2 \times 2$ rotation matrices $\rho_k(\theta) = \mathrm{Rot}_{k\theta}$.    
%\textbf{The induced representation} is the representation of a larger group constructed by composing elements of its subgroup.

%is a basis function with order/frequency of $i$, such that any representation $\rho$ of $G$ can be expressed as a \emph{direct sum} of irreducibles.
%on $R^{(2\ell+1)}$ which is irreducible i.e., has no sub-representation and is denoted by $D^{\ell}$.

%: $\rho(g)=Q^{\top}(\bigoplus_{i}\rho_{\mathrm{irrep}}^{i})Q$, where $Q$ is an orthogonal matrix. 
% \textbf{The induced representation} method defines a transformation to construct a representation of a larger group from its subgroup.

\textbf{Steerable Feature Maps.} 
%To build an equivariant neural network, all features must come with a predefined notion of how they are transformed by the symmetry group $G$.  
We consider all features to be steerable feature vector fields $f \colon \mathbb{R}^d \rightarrow \mathbb{R}^c$, which assign a feature vector $f(\mathbf{x}) \in \mathbb{R}^c$ to each position $\mathbf{x} \in \mathbb{R}^d$. 
% In practice, we discretize and truncate the domain of $f$ $\lbrace (i,j,k): 1 \leq i, j, k \leq W \rbrace$. For the purpose of theoretical analysis, we approximate discretized features as continuous functions.
An element $g \in \SO(d)$ acts on a steerable $\rho$-field $f$ by acting on both the base space $\mathbb{R}^d$ by rotating the pixel or voxel positions and on the fiber space $\mathbb{R}^c$ (a.k.a., channel space) by some representation $\rho$.
%$\rho \in \{\rho_0,\rho_1,\rho_{\regu},\rho_{\irrep}\}$.
We define the action of $g$ via $\rho$ on $f$ by  
%\begin{equation}
    $\Ind_\rho(g)(f)(\mathbf{x}) = \rho(g) \cdot f( \rho_1(g)^{-1} \mathbf{x})$. 
%\end{equation}
% For a more thorough discussion of induced representations please see \ref{Induced_reps}. 
We denote the base space action alone by $(\beta(g)f)(\mathbf{x}) = f( \rho_1(g)^{-1} \mathbf{x}).$  Note that $\beta = \Ind_{\rho_0}$.

\textbf{\pmb{$G$}-Equivariant Mappings.}
A mapping $F \colon X \to Y$ is \emph{equivariant} to a group $G$ acting on $X$ by $\Ind_{\rho_{X}}$ and $Y$ by $\Ind_{\rho_{Y}}$ if it intertwines the two group actions
%\begin{equation}
     $\Ind_{\rho_{Y}}(g)F[f] = F[\Ind_{\rho_{X}}(g) f], \quad \forall g\in G, f \in X.
    \label{eqn:equi_mapping}$
%\end{equation}
% Pictorially, a map $F$ is $G$-equivariant if and only if the diagram \ref{Diagram:G-Equivarient_Map} is commutative.
% \begin{center}\label{Diagram:G-Equivarient_Map}
% \centering
% \begin{tikzcd}
% & f \arrow{d}{g} \arrow{r}{ F } & F(f) \arrow{d}{ g }  \\
% & g f \arrow{r}{ F }  &  g F(f) 
% \end{tikzcd}
% \captionof{figure}{Commutative Diagram For $G$-equivariant function $F$.}
% \end{center} 
% For example, if $F$ takes a gray image $f$ of a cat and outputs the mask of a cat, a well-trained function $F$ constrained with \eref{eqn:equi_mapping} could pass the learned knowledge to the image rotated by any rotation of the group.
Equivariant linear mappings, i.e. intertwiners, between spaces of steerable feature fields are given by convolution with \emph{G-steerable kernels}~\citep{weiler20183d,jenner2021steerable}. Assume the input field type transforms as $\Ind_{\rho_{\mathrm{in}}}$ and the output field type as $\Ind_{\rho_{\mathrm{out}}}$. Then by \citet{cohen2019general}, Theorem 3.3, convolution with the kernel $K\colon \mathbb{R}^{d} \rightarrow \mathbb{R}^{d_{\mathrm{out}}\times d_{\mathrm{in}}}$ is $G$-equivariant if and only if its satisfies the \emph{steerability constraint},
%\begin{equation}
    $K(g\cdot x) = \rho_{\mathrm{out}}(g)K(x)\rho_{\mathrm{in}}(g)^{-1}
    \label{equ:steerablility_constraint}.$
%\end{equation}
A complete characterization and explicit parametrization of steerable kernels is given in \cite{Lang_2021_Wignereckart}.

\textbf{\pmb{$\SO(d)$} Fourier Transformation.}
Signals defined over the group $\SO(d)$ can be decomposed as limits of linear combinations of complex exponential functions (for $\SO(2)$) or Wigner D-matrices (for $\SO(3)$). 
We refer to the Fourier transform that maps $\SO(d)$-signals to the coefficients of the basis functions as $\mathcal{F}^{+}$ and the inverse Fourier transform as $\mathcal{F}^{-1}$. For a more in depth discussion of the Fourier transform, we refer the reader to Appendix~\ref{appendix:section:Fourier_Transform}.

\section{Method}
%problem statement
%from the specific problem to general math
\subsection{Problem Statement}

This paper focuses on behavior cloning for robotic pick-and-place problems. Given a set of expert demonstrations that contain a sequence of one or more observation-action pairs $(o_t, a_t)$, the objective is to learn a policy $p(a_t|o_t)$ where the action $a_t = (a_{\pick}, a_{\place})$ has pick and place components and the observation $o_t$ describes the current state of the workspace. In 2D manipulation tasks, $o_t$ is in the format of a top-down image, and in 3D manipulation tasks, $o_t$ is a voxel grid. Our model factors the policy $p(a_{t},o_{t})$ as
\begin{align*}
    p(a_{t}|o_{t}) = p(a_{\place}|o_t,a_{\pick}) p(a_{\pick} | o_t)
\end{align*}
where $p(a_{\pick} | o_t)$ and $p(a_{\place}|o_t,a_{\pick})$ are parameterized as the output of two separate neural networks. The pick action $a_{\pick} \in \SE(d)$ and place action $a_{\place} \in \SE(d)$ are decomposed in terms of translation and orientation $(T,R)\in \SE(d)$, where $T$ is pixel coordinates $(u,v)$ in 2D or voxel coordinates $(i,j,k)$ in 3D\footnote{Each pixel or voxel corresponds to a unique $(x,y,z)$ spatial coordinate in the workspace}. The rotational part of the action $R \in \SO(d)$ denotes the gripper orientation, which is a planar rotation $\Rot_\theta$ in 2D tasks and a three-dimensional rotation $R\in \SO(3)$ in 3D tasks. The pick rotation $R_{\pick} \in \SO(d)$ is defined with respect to the world frame and $R_{\place} \in \SO(d)$ is the relative rotation between the pick pose and place pose.

\subsection{$\SE(d)$-Equivariant Pick}
We first analyze the symmetry of the robotic pick task in $\SE(d)$, where $d=\{2,3\}$ indicates 2D or 3D picking. Then, we present our pick network which realizes this symmetry.

\textbf{Symmetry of the Pick Action \pmb{$f_{\pick}$}.}
%\textbf{Symmetry of the Pick Action $f_{\pick}$.} 
The pick network takes an input observation $o_t$ and outputs the pick pose probability distribution over $\SE(d)$, i.e., $f_{\pick} \colon o_t\mapsto p(a_\pick | o_t)$. We represent $p(a_\pick | o_t)$ as a steerable field $\mathbb{R}^d \to \{ SO(3) \to \mathbb{R}\}$ in which the base space determines the pick location and the fiber space (a.k.a., channel space) encodes the distribution over pick orientations.  The distribution $p:\SO(3) \to \mathbb{R}$ transforms by $(\rho_L(g)p)(h) = p(g^{-1}h)$ where the subscript $L$ denotes the left-action of $g^{-1}$ corresponding to post-composing with the rotation.   A consistent pick network should satisfy the pick symmetry relation,
\begin{equation}
% maybe use T_g^{0} and T_g^{irrep}?
 \forall g\in \SE(d), \quad    f_{\pick}(g\cdot o_t) = \Ind_{\rho_L}(g)f_{\pick}(o_t) 
    \label{pick-symmetry}
\end{equation}
\eref{pick-symmetry} illustrates the underlying symmetry of robotic picking problem, i.e., if there is a transformation $g \in \SE(d)$ on $o_t$ (RHS of \eref{pick-symmetry}), the pick pose distribution $p_{\pick}$ should transform accordingly by $\Ind_{\rho_L}$. Specifically, the action $\beta(g)$ on the base space rotates the pick location and the fiber action $\rho(g)$ transforms the pick orientation.

\textbf{Pick Symmetry Constraints.}
%\textbf{Pick Symmetry Constraints.}
To construct $f_{\pick}$ satisfying \eref{pick-symmetry}, we parameterize it with equivariant convolutional layers. %~\citep{cesa2021program}. 
The pick network $f_{\pick}$ takes the input observation $o_t$ as a $\rho_0$ field and outputs a $\rho_{\irrep}$ field where $\rho_{\irrep}$ represents the direct sum of irreducible representations giving the truncated Fourier space representation of functions over $\SO(d)$ .  To be specific, $f_\pick$ can be encoded as an equivariant U-Net that takes the observation and generates the coefficients of $\SO(d)$ basis functions.
%, which is induced to irreducible representation in the end.
As a result, $f_{\pick}$ generates an un-normalized distribution over $\SO(3)$ above each voxel coordinate or over $\SO(2)$ above each pixel coordinate. The pick pose distribution over $\SE(d)$ is calculated by jointly normalizing the signal on translation and orientation. The best pick pose $\Bar{a}_{\pick}$ can be evaluate by $\Bar{a}_{\pick} = \argmax p(a_{\pick}|o_t)$.

\begin{figure}[t]
    \centering
    %\scriptsize
    \includegraphics[width=1.\textwidth]{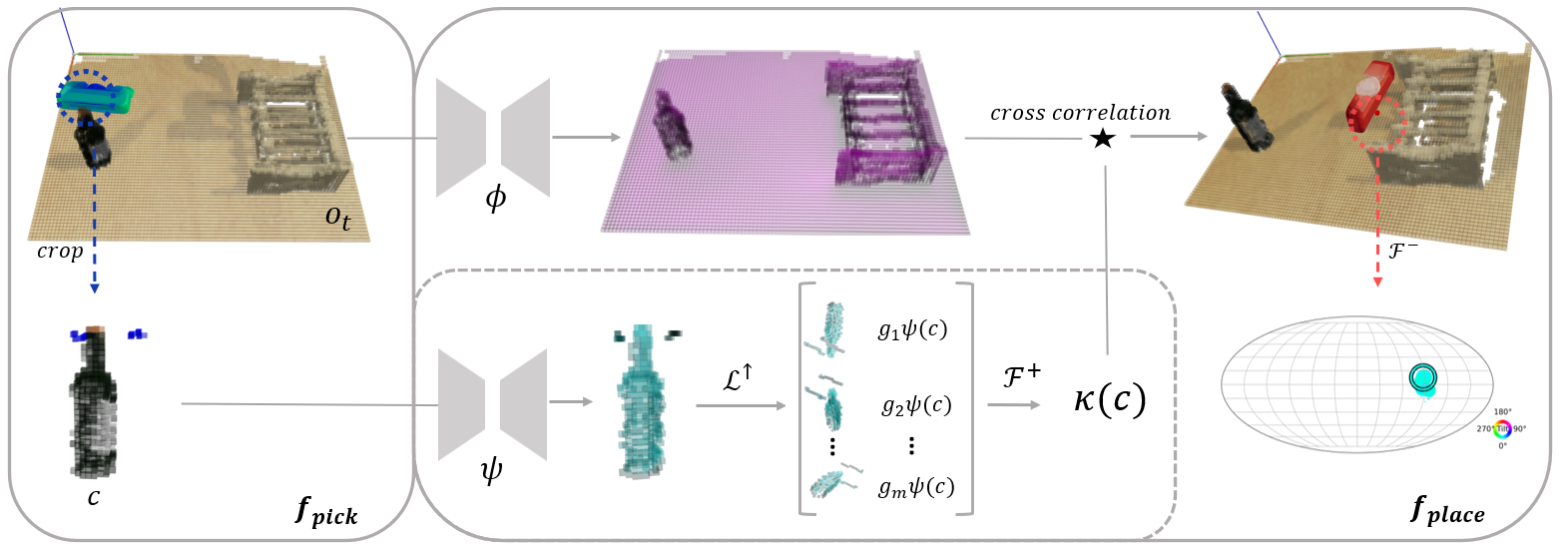}
    \vspace{-0.6cm}
    \caption{
    Architecture of \ours{}. $f_{\pick}$ first detects a task-appropriate pick pose. The crop $c$ centered at the pick location is fed to network $\psi$. The lift operation generates a stack of rotated features and Fourier transformation $\mathcal{F}^{+}$ is applied to the channel space of the feature to output the dynamic kernel $\kappa(c)$. The cross correlation is conducted in Fourier space.\vspace{-0.4cm}}
    \label{fig:archi}
    
\end{figure}

\subsection{$\SE(d) \times \SE(d)$-Equivariant Place}
We first clarify the coupled symmetries inside place tasks.
%and propose a general solution to achieve the symmetries. 
Then, we present how \ours{} realizes the solution.

\textbf{Symmetry of the Place Action \pmb{$f_{\place}$}.}
After the pick action $\Bar{a}_{\pick}$ is determined, the place network $f_{\place}$ infers the place action $a_{\place}$ to transport the object to be grasped to the target placement. We assume the object does not move or deform during grasping so that $a_{\pick}$ may be geometrically represented by an image or voxel patch centered on the pick location\footnote{The pick pose can also be geometrically represented by occupied voxel.}, as shown in Figure~\ref{fig:archi}. Our place network is described 
\begin{equation}
    f_{\place} : (c, o_t) \mapsto p(a_{\place}|o_t,a_{\pick})
    \label{eqn:place-function}
\end{equation}
where $p(a_{\place}|o_t,a_{\pick})$ denotes the probability that the object grasped at $a_{\pick}$ in scene $o_t$ should be placed at $a_{\place}$. 

Since the place action is conditioned on the pick action, a consistent place  network $f_{\place}$ should satisfy the following bi-equivariance constraint.  For $f \colon \SO(3) \to \mathbb{R}$, denote $(\rho_R(g)f)(h) = f(hg^{-1})$ the right action on orientation distributions corresponding to pre-rotation by $g^{-1}$. The place network $f_\place$ should be $\SE(d) \times \SE(d)$-equivariant:
\begin{equation}
\forall g_{1},g_{2} \in \SE(d), \quad        f_\place(g_1 \cdot c, g_2\cdot o_t) = \Ind_{\rho_L}(g_2) \rho_R(g_1^{-1}) f_\place(c, o_t)
    \label{eqn:place-symmetry}
\end{equation}
\eref{eqn:place-symmetry} states that if $g_1\in \SE(d)$ acts on the picked object and $g_2 \in \SE(d)$ acts on the observation $o_t$ that contains the placement of interest\footnote{Strictly, $g_2$ acts on $o_t \backslash c$, the observation excluding the patch, since the picked object is transformed by $g_1$.}, the desired place action will transform accordingly. Recalling $\Ind_{\rho_L}$ acts on the location and orientation, the place orientation $R_{\place}$ will be transformed to $\rho_1(g_2)  R_{\place} \rho_1(g_1^{-1})$, and the place location $T_{\place}$ will be transformed to $\rho_1(g_2) T_{\place}$.
% \begin{enumerate}
%     \item the place orientation $R_{\place}$ should be transformed to $\rho_1(g_2)  R_{\place} \rho_1(g_1^{-1})$
%     %which is captured by fiber space transformation $\rho(g_2)\rho(g_1^{-1})$ on the RHS of \eref{eqn:place-symmetry} 
%     \item the place location $T_{\place}$ should be transformed to $\beta_1(g_2) T_{\place}$
%     %and it is represented by the base space transformation $\beta(g_2)$.
% \end{enumerate}
%1). the place orientation $R_{\place}$ should be transformed to $\rho_1(g_2) \rho_1(g_1^{-1}) R_{\place}$ which is captured by fiber space transformation $\rho(g_2)\rho(g_1^{-1})$ on the RHS of \eref{equ:place-symmetry} ; 2). the place location $T$ should be transformed to $\rho_1(g_2) T$ and it is represented by the base space transformation $\mathcal{B}(g_2)$. 

%{Note that $a_{\place}=(T,R)$, where $R$ is the \emph{relative} orientation from the pick pose to place pose and $T$ is the place location represented by the pixel/voxel coordinate.}

\textbf{Realizing place symmetry: Solution. }
In order to design a network architecture $f_{\place}$ which satisfies \eref{eqn:place-function}, we follow the previous works \citep{zeng2021transporter,Huang-RSS-22} which treat the placing as a template matching problem and encode $f_{\place}$ with two separate functions $
\phi$ and $\kappa$ to process $o_t$ and $c$ respectively. Then $p(a_{\place}|o_t,a_{\pick})$ is computed as the cross-correlation between $\kappa(c)$ and $\phi(o_t)$,
\begin{equation}
    f_{\place}(c, o_t) = \kappa(c) \star \phi(o_t)
    \label{eqn:place-architecture}
\end{equation}
where $\star$ denotes the cross correlation. Specifically, $\kappa(c) \colon \mathbb{R}^{d} \rightarrow \mathbb{R}^{d_{\mathrm{out}} \times d_{\mathrm{in}}} $ is a dynamic kernel and $\phi(o_t)\colon \mathbb{R}^d \rightarrow \mathbb{R}^{d_\mathrm{in}}$ is a
%dense element-wise
feature map generated from the workspace observation $o_t$. 

A schematic of our proposed method \ours{} is shown in Figure~\ref{fig:archi}. In the top branch, an encoder $\phi$ uses convolutional layers to map the input observation $o_t$ to a dense feature map. Both $o_t$ and $\phi(o_t)$ are considered $\rho_0$-fields. In the bottom branch, the crop $c$ is processed by $\psi$, which has the same architecture as $\phi$, to generate a dense feature map. Then, we lift $\psi(c)$ with a finite number of rotations $\Tilde{G}=\{ g_{i}\ | g_i \in \SO(d)\}_{i=1}^{m}$ to generate a stack of rotated feature maps. Specifically, we define $\mathcal{L}^{\uparrow\Tilde{G}}$ as
%\begin{equation}
%    \mathcal{L}^{\uparrow\Tilde{G}}(\psi(c)) = \{\beta(g)\psi(c) \:|\: \forall g \in \Tilde{G}\}
%    \label{eqn:lifter}
%\end{equation}
\begin{align}
\forall x \in \mathbb{R}^{n},  g_i \in \Tilde{G} \quad    \mathcal{L}^\uparrow [ f ] (x)=\{ {f(g^{-1}_{1} x)}, {f(g^{-1}_{2} x)} \cdots, {f(g^{-1}_{m} x)}\}
\end{align}
a stack of fully rotated signals above each pixel or voxel. We then apply a Fourier transform to the channel-space. 
%to get $\SO(d)$ irreducible representations. 
Our dynamic kernel generator $\kappa$ is summarized as 
\begin{equation}
    \kappa(c) = \mathcal{F}^{+}[\mathcal{L}^\uparrow(\psi(c))]
    \label{eqn:kappa}
\end{equation}
where $\mathcal{F}^{+}$ denotes the Fourier transform in the channel space. Finally, the cross-correlation between $\kappa(c)$ and $\phi(o_t)$ is performed in the Fourier space. Appendix.\ref{Pseudocode} shows the pseudocode of the inference step of \ours{}.

%TALK ABOUT THE IMPORTANCE OF FOURIER
Representing $\SO(d)$ rotations in Fourier space can achieve high angular resolution and enormously save the computation load. In contrast, directly cross-correlating between a number of rotated $\psi(c)$ and $\phi(o_t)$ independently is impossible when the action space contains a large number of rotations, especially in 3D. For example, $10^{\degree}$ discretization along each axis in 3D will result in $36^{3}$ rotations, which is 46,656 in total.

\textbf{Realizing place symmetry: Theory.} We present a general solution to
achieve the bi-equivariant symmetries of Equation~\ref{eqn:place-symmetry} and analyze how our porposed \ours{} satisfies it.
\begin{proposition} \eref{eqn:place-architecture} satisfies the bi-equivariant symmetry stated in \eref{eqn:place-symmetry} if the following constraints hold:
\label{proposition1}
\begin{enumerate}
    % is the trivial-input type necessary???????
    \item $\psi(c)$ satisfies the equivariant property that $\psi(g \cdot c)=\beta(g) \psi(c)$,
    \item $\phi(o_t)$ satisfies the equivariant property that $\phi(g \cdot o_t) = \beta(g)\phi(o_t)$, 
    \item $\kappa(c) \colon \mathbb{R}^{d} \rightarrow \mathbb{R}^{d_{\mathrm{out}} \times d_{\mathrm{trivial}}}$ is a \emph{steerable} convolutional kernel with $\rho_0$-type input.
\end{enumerate}

\end{proposition}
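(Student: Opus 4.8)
The plan is to exploit the product structure of the group $\SE(d)\times\SE(d)$: since $g_1$ acts only on the crop $c$ and $g_2$ acts only on the scene $o_t$, and since the induced fibre actions $\rho_R(g_1^{-1})$ and $\Ind_{\rho_L}(g_2)$ commute (left and right regular actions on functions over $\SO(d)$ always commute, and $\rho_R$ leaves the base space untouched), it suffices to verify the two one-parameter equivariances separately and then compose them. So I would split the proof into a ``left'' statement $f_\place(c, g_2\cdot o_t)=\Ind_{\rho_L}(g_2) f_\place(c, o_t)$ and a ``right'' statement $f_\place(g_1\cdot c, o_t)=\rho_R(g_1^{-1}) f_\place(c, o_t)$.

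For the left statement I would use constraints 2 and 3 together with the steerable-kernel theorem quoted earlier (\citet{cohen2019general}, Thm.~3.3). By constraint 2, $\phi(g_2\cdot o_t)=\beta(g_2)\phi(o_t)$, so the $\rho_0$-input field is merely transported by $\beta=\Ind_{\rho_0}$. By constraint 3 the dynamic kernel $\kappa(c)$ is $\SE(d)$-steerable with $\rho_0$-type input and $\rho_L$-type output, hence cross-correlation with it intertwines $\Ind_{\rho_0}$ on the input with $\Ind_{\rho_L}$ on the output. Chaining these, $f_\place(c,g_2\cdot o_t)=\kappa(c)\star\phi(g_2\cdot o_t)=\kappa(c)\star\beta(g_2)\phi(o_t)=\Ind_{\rho_L}(g_2)\bigl(\kappa(c)\star\phi(o_t)\bigr)$, which is exactly the left statement.

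The right statement is the crux, and here I would trace $g_1$ through the construction $\kappa(c)=\mathcal{F}^{+}[\mathcal{L}^\uparrow(\psi(c))]$ of \eref{eqn:kappa}. First, constraint 1 gives $\psi(g_1\cdot c)=\beta(g_1)\psi(c)$. Next I would unwind the lift: writing the lifted stack as a function on $\SO(d)$ via $\mathcal{L}^\uparrow[f](x)(h)=f(h^{-1}x)$, a short computation shows $\mathcal{L}^\uparrow[\beta(g_1)f](x)(h)=f((hg_1)^{-1}x)=\bigl(\rho_R(g_1^{-1})\mathcal{L}^\uparrow[f]\bigr)(x)(h)$, i.e. the geometric base-space rotation of the crop is converted into the right-regular translation of the orientation fibre. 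Because the $\SO(d)$ Fourier transform $\mathcal{F}^{+}$ is an isomorphism of representations, it intertwines this right-regular action with the right action $\rho_R$ on the Fourier coefficients, so $\kappa(g_1\cdot c)=\rho_R(g_1^{-1})\kappa(c)$. Finally, since $\rho_R$ acts linearly on the output fibre index of the kernel and cross-correlation is linear in $\kappa$, the factor pulls out: $f_\place(g_1\cdot c, o_t)=\kappa(g_1\cdot c)\star\phi(o_t)=\rho_R(g_1^{-1})\bigl(\kappa(c)\star\phi(o_t)\bigr)$.

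Composing the two statements and using the commutativity noted above yields $f_\place(g_1\cdot c, g_2\cdot o_t)=\Ind_{\rho_L}(g_2)\rho_R(g_1^{-1}) f_\place(c,o_t)$, which is \eref{eqn:place-symmetry}. The main obstacle I anticipate is the middle step of the right statement: rigorously showing that the lift followed by $\mathcal{F}^{+}$ converts the base-space rotation $\beta(g_1)$ into precisely $\rho_R(g_1^{-1})$ on the orientation distribution, including checking that the translational part of $g_1$ acts trivially on the output (consistent with the remark that $T_\place$ transforms only under $g_2$) and that the finite lift set $\tilde{G}$ faithfully realizes the continuous intertwining relation used in the idealized argument.
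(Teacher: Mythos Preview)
Your proposal is correct and follows the same high-level decomposition as the paper: split into a left ($g_2$) equivariance and a right ($g_1$) equivariance, then compose using the commutativity of the two fibre actions. The execution differs in two places worth noting. For the left statement, the paper does not invoke \citet{cohen2019general} as a black box but instead isolates two elementary lemmas---(i) a steerable kernel with $\rho_0$-input satisfies $\beta(g)K=\rho_{\out}(g^{-1})K$, and (ii) cross-correlation commutes with simultaneous base rotation, $\beta(g)(K\star f)=(\beta(g)K)\star(\beta(g)f)$---and chains them with constraint~2; this is exactly the content of the theorem you cite, just unpacked. For the right statement, the paper works \emph{before} the Fourier transform, directly on the lifted stack $\hat\kappa(c)=\mathcal{L}^\uparrow(\psi(c))$, and argues at the level of which channel index produces the best match after rotating $c$ by $h$ (showing $g_j h=g_i$, hence $g_j=ah^{-1}$); your representation-theoretic route tracing $\rho_R$ through $\mathcal{F}^{+}$ is cleaner in that it transforms the full distribution rather than only the argmax. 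The obstacle you anticipate---that the finite sample $\tilde G$ only approximately realizes the continuous intertwiner---is precisely what the paper defers to Proposition~\ref{proposition2}.
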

The full proof of Proposition~\ref{proposition1} is in Appendix~\ref{proof_proposition1}. Intuitively, rotations on the crop $c$ and the placement $o_t$ result in consistent transformations of the kernel $\kappa(c)$ and feature map $\phi(o_t)$. The steerability of the kernel bridges the two transformations during the cross-correlation.

%and is necessary for the correlation to be equivariant to transformations of $o_t$. 
\ours{} satisfies the three constraints listed in Proposition~\ref{proposition1}. The first and the second constraints hold since $\psi$ and $\phi$ are implemented using equivariant convolution layers~\cite {cesa2021program}.
% and learning from data augmentation
The third constraint is shown in Proposition~\ref{proposition2}.

\begin{proposition}
\label{proposition2}
Let $f \colon \mathbb{R}^{3} \rightarrow \mathbb{R}^{k}$ be an $\SO(3)$-steerable feature field. Then, the lifting operator $\mathcal{L}^{\uparrow I_{60}}(f)$ generates a $I_{60}$-steerable kernel with regular-type output and trivial input. The fiber space Fourier transformation $\mathcal{F}^{+}[\mathcal{L}^\uparrow(f)]$ is approximately an $SO(3)$-steerable kernel with trivial input and output type $\bigoplus_{\ell=0}^{\ell_{max}} (2\ell + 1) D^{\ell}$.
\end{proposition}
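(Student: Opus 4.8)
The plan is to prove the two assertions separately, using the Peter--Weyl decomposition of the regular representation as the bridge between them, and then to isolate where the word \emph{approximately} is forced by the finiteness of $I_{60}$.

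For the first assertion I would index the output fiber of $\mathcal{L}^{\uparrow I_{60}}$ by the group elements $g \in I_{60}$, so that $K(x) := \mathcal{L}^{\uparrow I_{60}}[f](x)$ has $g$-component $K(x)_g = f(\rho_1(g)^{-1}x)$. Since the kernel is cross-correlated against the $\rho_0$-field $\phi(o_t)$, its input type is trivial and the steerability constraint $K(\rho_1(h)x) = \rho_{\mathrm{out}}(h)K(x)\rho_{\mathrm{in}}(h)^{-1}$ reduces to $K(\rho_1(h)x) = \rho_{\mathrm{reg}}(h)K(x)$ for $h \in I_{60}$. I would verify this by a single index substitution: the $g$-component of the left-hand side is $f(\rho_1(g)^{-1}\rho_1(h)x) = f(\rho_1(h^{-1}g)^{-1}x) = K(x)_{h^{-1}g}$, while from $\rho_{\mathrm{reg}}(h)e_g = e_{hg}$ the $g$-component of $\rho_{\mathrm{reg}}(h)K(x)$ is likewise $K(x)_{h^{-1}g}$. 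This is exact, holds channel-by-channel for each of the $k$ components of $f$, and does not yet use that $f$ is $\SO(3)$-steerable; the output type is therefore $k$ copies of $\rho_{\mathrm{reg}}$.

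For the second assertion the key fact I would invoke is that $\mathcal{F}^{+}$ is precisely the intertwiner block-diagonalizing the regular representation: by Peter--Weyl, left translation of functions on $\SO(3)$ decomposes as $\bigoplus_{\ell}(2\ell+1)D^\ell$, the row index of each coefficient block $\hat p^{\ell}\in\mathbb{C}^{(2\ell+1)\times(2\ell+1)}$ carrying the irrep $D^\ell$ and the column index supplying its multiplicity $2\ell+1$. I would make this exact for the \emph{continuous} lift first: a computation identical to the one above gives $\mathcal{L}^{\uparrow \SO(3)}[f](\rho_1(R)x)(g) = f(\rho_1(g)^{-1}\rho_1(R)x) = \big(\rho_{\mathrm{reg}}(R)\,\mathcal{L}^{\uparrow \SO(3)}[f](x)\big)(g)$ for all $R \in \SO(3)$, so composing with $\mathcal{F}^{+}$ and truncating at $\ell_{max}$ produces an exactly $\SO(3)$-steerable kernel with trivial input and output type $\bigoplus_{\ell=0}^{\ell_{max}}(2\ell+1)D^\ell$.

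It remains to account for replacing the continuous lift by the finite one, which is the source of \emph{approximately}. I would treat $I_{60}$ as a cubature rule on $\SO(3)$: the discrete transform $\mathcal{F}^{+}\circ\mathcal{L}^{\uparrow I_{60}}$ agrees with the continuous $\mathcal{F}^{+}\circ\mathcal{L}^{\uparrow \SO(3)}$ exactly whenever the $I_{60}$-average reproduces the Haar integrals of the products of Wigner entries that appear up to degree $\ell_{max}$. Because $I_{60}$ integrates such band-limited integrands exactly only up to a fixed degree, the two constructions coincide on that band and differ by an aliasing term beyond it. Hence for $h \in I_{60}$ steerability is exact (Part~1 pushed through the intertwiner), while for a general $R \in \SO(3)$ it holds only up to this aliasing error, which is the intended meaning of ``approximately.'' I expect the genuine obstacle to be making this last step quantitative---pinning down the cubature-exactness degree of the icosahedral group on $\SO(3)$ and bounding the aliasing as a function of $\ell_{max}$---whereas the two steerability identities are routine substitutions.
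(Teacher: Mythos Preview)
Your argument for the first assertion is essentially the paper's: both of you index the lifted fiber by the elements of $I_{60}$, observe that a base rotation $h\in I_{60}$ re-indexes the stack by the left-multiplication permutation $g\mapsto h^{-1}g$, and identify the resulting fiber action with $\rho_{\mathrm{reg}}$. The paper phrases this via the map $g(i)$ defined by $g_{g(i)}=g\cdot g_i$ and checks $\rho(g)_{ij}=\delta_{ig(j)}$ is a representation, but the content is identical.

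For the second assertion your route is genuinely different. The paper does \emph{not} pass through the continuous lift $\mathcal{L}^{\uparrow\SO(3)}$ and then treat $I_{60}$ as a cubature scheme whose aliasing error controls the discrepancy. Instead it argues directly on the finite sample set: for an arbitrary $g\in\SO(3)$ it defines $g_{g(i)}$ to be the \emph{nearest} sampled element to $g\cdot g_i$ in the geodesic metric, writes $\mathcal{L}^\uparrow(f)(gx)$ as the permuted stack $\rho(g)\mathcal{L}^\uparrow(f)(x)$ plus an error term $\{f(g_k^{-1}gx)-f(g_{g(k)}^{-1}x)\}_k$, and bounds each entry by $L\epsilon/m$ assuming $f$ is $L$-Lipschitz and the sample set is $\epsilon/m$-dense. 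The approximate representation property $\rho(g)\rho(g')\approx\rho(gg')$ is then inherited from the near-closure of the sample under left multiplication. Your cubature viewpoint is cleaner for the specific subgroup $I_{60}$, since it ties the error directly to the band-limit $\ell_{\max}$ and makes the ``exact below a cutoff, aliased above'' structure explicit; the paper's nearest-neighbor argument is cruder in that respect but has the advantage of applying verbatim to the stochastically sampled $384$-rotation lift actually used in the implementation, where no exact-integration degree is available. You correctly anticipate that the quantitative part---the cubature exactness degree of $I_{60}$ on $\SO(3)$---is the nontrivial step in your version; the paper sidesteps this entirely at the cost of importing a Lipschitz hypothesis on $f$.
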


Proposition~\ref{proposition2} states that $\kappa(c)$ in \eref{eqn:kappa} is a dynamic steerable kernel generator, which satisfies the third constraint in Proposition~\ref{proposition1}. The proof of Proposition~\ref{proposition2} is derived in Appendix~\ref{proof_proposition2}.

\subsection{Sampling rotations in a coarse-to-fine fashion}

Coarse to fine sampling methods \citep{james2022coarse} are numerically efficient importance sampling schemes that only sample in regions of dense signal. The pick network $f_{\pick}$ and place network $f_{\place}$ output the $\SO(d)$ distribution for each pixel or voxel in the format of coefficients of $\SO(d)$ basis functions. When taking the inverse Fourier transform, we can sample a finite number of rotations for each element. However, the memory requirements for the voxel grid are cubic in the resolution of the grid and therefore limit the the number of sampled rotations for each voxel. 
To solve this, we first sample a small number of rotations and evaluate the best coarse pick action $(T_{\pick}, R_{\pick})$ and best place action $(T_{\place}, R_{\place})$. Then, we locate the best pick and place location $T_{\pick}$ and $T_{\place}$ and generate higher-order fourier signals with equivariant layers from the corresponding hidden features.
Finally, we conduct a second sampling to generate a large number of fine $\SO(3)$ rotations for $T_\pick$ and $T_\place$ and calculate the best fine orientation $R^{\star}_{\pick}$ and $R^{\star}_{\pick}$. 
The robot will receive commands of $(T_{\pick}, R^{\star}_{\pick}), (T_{\place},R^{\star}_{\place})$ in the same time step and execute the actions.

\section{Experiments}
\vspace{-0.1cm}
\subsection{Model Architecture Details\vspace{-0.1cm}}
In \ours{}, $f_{\pick}$ is a single convolutional network and $f_{\place}$ is composed of two equivariant convolutional networks, $\phi$ and $\psi$. We implement them as 18-layer residual networks with a U-Net~\citep{ronneberger2015u} as the backbone. The U-Net has 8 residual blocks and each block contains two equivariant convolution layers and one skip connection. The first layer maps the trivial-representation $o_t$ to regular representation, and the last equivariant layer transforms the $\rho_{\regu}$-type feature to trivial representation for $\psi$ and $\phi$ and to a direct sum of irreducible representations for $f_\pick$. We use pointwise ReLU activations~\citep{nair2010rectified} inside the network. 

For 3D \ours{}, we use $O_{24}$ regular representations in the hidden layers and lift the 3D crop features with a set of 384 approximately uniformly sampled rotations of $\SO(3)$. We show that this random lifting approximately generates a steerable kernel in the Appendix \ref{Stochastically Sampled Lifting Map}. To decode the $\rho_{\irrep}$ feature, we first coarsely sample 384 rotations for every voxel and later conduct a finer sampling of 26244 rotations on a subregion. The maximum order $\ell_{max}$ of the Winger-D matrices in the coarse and fine sampling levels is $\ell_{max}=2$ and $\ell_{max}=4$, respectively. In 2D \ours{}, we select the $C_4$ group in the intermediate layers of the three 2D convolution networks and use the $C_{90}$ group to lift $\psi(c)$. After the Fourier transform, the frequency is truncated with the max order of 37.

 %For three-dimensional simulations, the 3D voxel size is set to be $ 72 \times 96 \times 72$ for the $0.678m \times 0.905m \times 0.528m$ workspace.
 \vspace{-0.1cm}
\subsection{3D Pick-Place\vspace{-0.1cm}}

3D pick-place tasks are difficult due to the large observation and action spaces. We conduct our primary experiments on five tasks shown in Figure~\ref{fig:3d_task_des} from RLbench~\citep{james2019rlbench} and compare with three strong baselines~\citep{james2022coarse,shridhar2023perceiver,goyal2023rvt}.

\textbf{3D Task Description.}
% We test our proposed method on a set of standard RL benchmarks \cite{james2019rlbench}. 
We choose the five most difficult tasks from \cite{james2019rlbench} to test our proposed method.
\textit{\textbf{Stack-blocks:}} It consists of stacking two blocks of the red color on the green platform.
\textit{\textbf{Stack-cups}}: In stack-cups, the agent must stack two blue cups on top of the red color cup.
\textit{\textbf{Stack-wine}}: The agent must grab the wine bottle and put it on the wooden rack at one of three specified locations.
\textit{\textbf{Place-cups}}: The agent must place one mug on the mug holder. This is a very high precision task where the handle of the mug has to be exactly aligned with the spoke of the holder for the placement to succeed.
\textit{\textbf{Put-plate}}: The agent is asked to pick up the plate and insert it between the red spokes on the dish rack. This is also a high-precision task. 
The different 3D tasks are shown graphically in Figure~\ref{fig:3d_task_des}. Note that the object poses are randomly sampled at the beginning of each episode and the agent needs to learn to generalize to novel object poses.
%\subsubsection{ 3D Baselines.}

\textbf{\textbf{3D Baselines.}} Our method is compared against three state-of-the-art baselines:
\textit{\textbf{C2FARM-BC}}
~\citep{james2022coarse} represents the scene with multi-resolution voxels and infers the next key-frame action using a coarse-to-fine scheme.
\textit{\textbf{PerAct}}
~\citep{shridhar2023perceiver} is the
state-of-the-art multi-task behavior cloning agent that tokenizes the voxel grids together with a language description of the task and learns a language-conditioned policy with Perceiver Transformer~\citep{jaegle2021perceiver}.
\textit{\textbf{RVT}}
~\citep{goyal2023rvt} projects the 3D observation onto five orthographic images and uses the dense feature map of each image to generate 3D actions. 

%\subsubsection{training and testing}
\textbf{Training and Metrics.} We train our method with $\{1,5,10\}$ demonstrations and train the baselines with 10 demonstrations on each task individually. The single-task versions of the baselines are denoted as `-Single'. All methods are trained for 15K SGD steps, and we evaluate them on 25 unseen configurations every 5K steps. Each evaluation is averaged over 3 evaluation seeds, and we report the best evaluation across the training process. In favor of the baselines, we also include the results of multi-task versions of the baselines trained on 16 different tasks of RLbench with 100 demonstrations per task from~\cite{goyal2023rvt}, denoted as `-Multi'. Please note put-plate task is not covered in ~\cite{goyal2023rvt}. To measure the effects of discretization error and path planning, 
% we also report the expert performance by framing the successful expert demonstrations in the same fashion as ours.
we also report the expert performance in the discrete action space used by our method.
%as an upper bound of the performance.

\textbf{3D Results.}
We report the results of all methods in Table~\ref{tab:3d_results}. Several conclusions can be drawn from Table~\ref{tab:3d_results}: 1) \ours{} significantly outperforms all baselines trained with 10 demos on all the tasks. 2) For tasks with a high-precision requirement, e.g., \textit{stack-cups}, \ours{} keeps a high success rate while all the baselines fail to learn a good policy. 3) \ours{} achieves better sample efficiency, and with $\{1,5\}$ demonstrations, it can outperform baselines trained with hundreds of demonstrations.

%---------------------------------------------
\begin{figure}[t]
% \captionsetup{font=scriptsize}
     \centering
     \begin{subfigure}[b]{0.195\textwidth}
         \centering
         \includegraphics[width=0.99\textwidth]{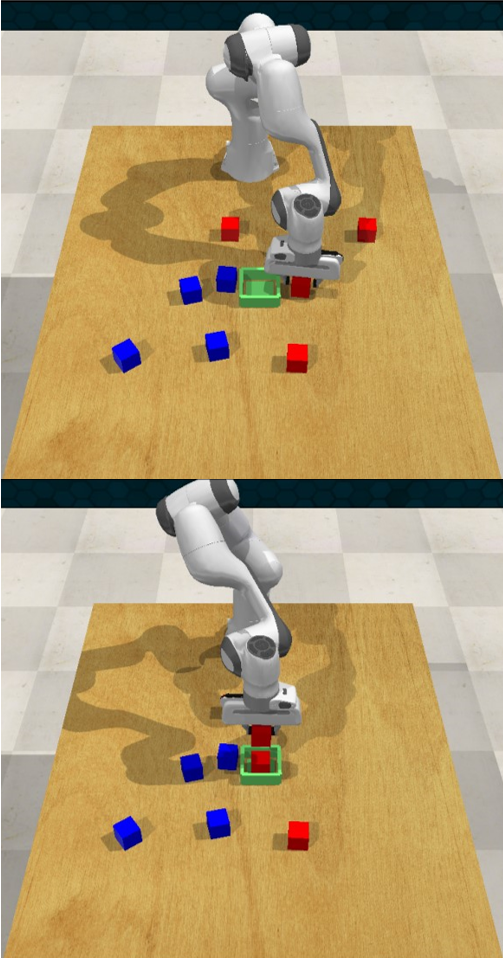}
         %\caption{}
         % \label{}
     \end{subfigure}
     %\hspace{0.1cm}
     \begin{subfigure}[b]{0.195\textwidth}
         \centering
         \includegraphics[width=0.99\textwidth]{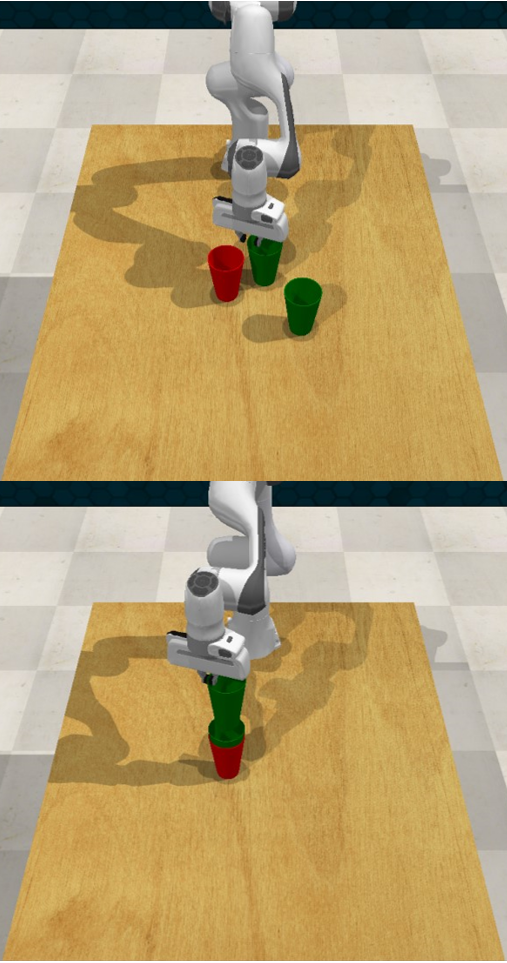}
         %\caption{}
         % \label{}
     \end{subfigure}
     %\hspace{0.1cm}
     \begin{subfigure}[b]{0.195\textwidth}
         \centering
         \includegraphics[width=0.99\textwidth]{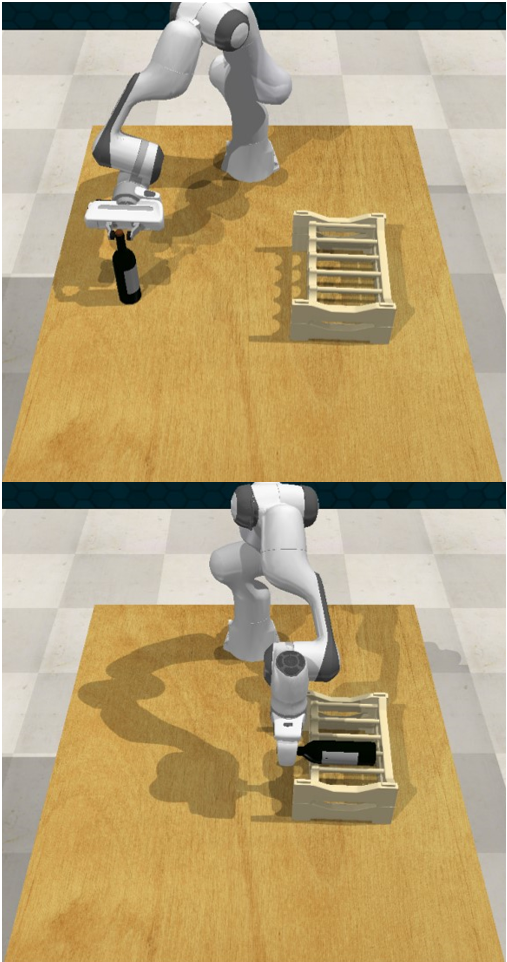}
         %\caption{}
         % \label{}
     \end{subfigure}
     %\hspace{0.1cm}
     \begin{subfigure}[b]{0.195\textwidth}
         \centering
         \includegraphics[width=0.99\textwidth]{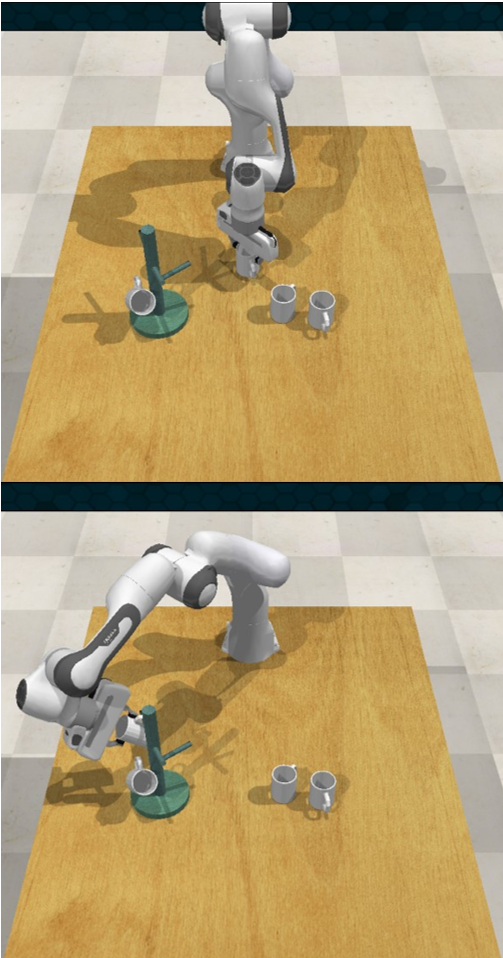}
         %\caption{}
         % \label{}
     \end{subfigure}
     %\hspace{0.1cm}
     \begin{subfigure}[b]{0.195\textwidth}
         \centering
         \includegraphics[width=0.99\textwidth]{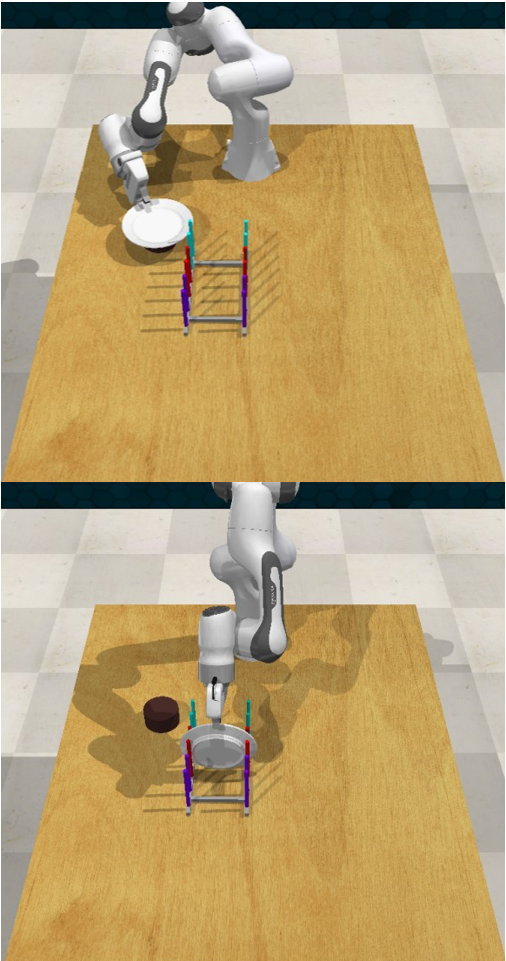}
         %\caption{}
         % \label{}
     \end{subfigure}
     \vspace{-0.5cm}
     \caption{3D pick and place tasks. From left to right the tasks are: Stack-blocks, Stack-Cups, Stack-Wine, Place-Cups, and Put-Plate. The top row shows the initial scene and the bottom row shows the completion state.\vspace{-0.3cm}}
     \label{fig:3d_task_des}
        %\vspace{-1cm}
\end{figure}

\begin{table}[t]
    \centering
    % \begin{tabular}{lcccccccccccc}
    % \toprule
    % & \multicolumn{3}{c}{\ours{} (ours) }   & \multicolumn{2}{c}{RVT}  & \multicolumn{2}{c}{PerActor} & \multicolumn{2}{c}{C2Farm} & {Discrete Expert} \\
    % \cmidrule(lr){2-4} \cmidrule(lr){5-6} \cmidrule(lr){7-8} \cmidrule(lr){9-10}
    %  \# demos     &  1 & 5 & 10 &  10 & 100$^{\star}$  &  10 & 100 $^{\star}$  &  10 & 100 $^{\star}$ &  \\
    %      \midrule
    %      stack-blocks & 4 & 76 &  80&  9.3 & 28.8 & 52 & 36 & 36 & 0 & 100\\
    %      stack-cups & 2.6 & 88 & 92 & 13.3 & 26.4 & 1.3 & 0 & 0 & 0 & 92\\
    %      stack-wine & 89.3 & 100 & 100 & 33.3 & 91.0 & 12 & 12 & 1.3 & 8 & 100\\
    %      place-cups & 8 & 10.6 & 26.6 & 1.3 & 4.0 & 1.3 & 0 & 0 & 0 & 90.6\\
    %      put-plate \footnote{This task is not covered in  RVT~\citep{goyal2023rvt}} & 16 & 32 & 66.6 & 54.6 & $\backslash$ & 8 & $\backslash$ & 1.3 & $\backslash$ & 65.3\\
    % \bottomrule
    % \end{tabular}

    \setlength{\tabcolsep}{3pt}
    \fontsize{9pt}{6pt}\selectfont
    % \captionsetup{font=scriptsize}
    \begin{tabular}{ccccccc}
    \toprule
Model & \# demos             & stack-blocks & stack-cups & stack-wine & place-cups & put-plate %\footnotemark
\\\midrule
\multicolumn{1}{c}{\ours{} (ours)} & 1                    & 4            & 2.6        & 89.3       & 8          & 16        \\
\ours{} (ours)                     & 5                    & 76           & 88         & 100        & 10.6       & 32        \\\midrule
\ours{} (ours)                     & 10  & \textbf{80}           & \textbf{92}         & \textbf{100}        & \textbf{26.6}       & \textbf{66.6}      \\
RVT-Single                                & 10                     & 9.3          & 13.3       & 33.3       & 1.3        & 54.6      \\
PerAct-Single                           & 10                     & 52           & 1.3        & 12         & 1.3        & 8         \\
C2FARM-BC-Single                             &  10                    & 36           & 0          & 1.3        & 0          & 1.3       \\\midrule
RVT-Multi                                & 100 & 28.8         & 26.4       & 91.0       & 4.0        & -         \\
PerAct-Multi                           & 100                     & 36           & 0          & 12         & 0          & -         \\
C2FARM-BC-Multi                             & 100                     & 0            & 0          & 8          & 0          & -         \\\midrule
Discrete Expert                    & -                    & 100          & 92         & 100        & 90.6       & 65.3     \\\bottomrule
\end{tabular}
    \caption{Performance comparisons on RL benchmark. Success rate (\%) on 25 tests v.s. the number of demonstration episodes (1, 5, 10) used in training. Even with only 5 demos, our method is able to outperform existing baselines by a significant margin.\vspace{-13pt}}
    \label{tab:3d_results}
\end{table}

% go to appendix
% We train a single-task agent for each baseline on the five tasks with the same settings presented in their paper. Due to the computation load of equivariant convolutional layers on 3D voxel grids, we slightly lift the third constraint of Proposition~\ref{proposition1} by encoding our $\phi$ with a traditional U-net \citep{Ronneberger_2015_U-Net}. U-net with the long skip connection also maintains a certain amount of the $\rho_{0}\mapsto \rho_0$ equivariance.
% Note that we have $\{$pre-pick, pick, post-pick$\}$ and $\{$pre-place, place, post-place$\}$ where the pre-action and post-action are defined as relative to the pick and place action while the baseline predicts the action for the next keyframe without a clear line between pick and place. A comparison of these different models is shown in the appendix \ref{Model Comparisons}.

\vspace{-0.2cm}
\subsection{2D Pick-Place\vspace{-0.1cm}}

We further evaluate the ability of \ours{} to solve precise pick-place tasks in 2D where the action space is $(u,v,\theta)$, i.e., $x, y$ translations and top-down rotation. We adopt three tasks shown in Figure~\ref{fig:2d_task_des} from the Ravens Benchmark
% \footnote{suction gripper}
~\citep{zeng2021transporter}.

\textbf{2D Task Description.} \textit{\textbf{block-insertion:}} The agent must pick up an L-shape block and place it into an L-shaped fixture;
\textit{\textbf{assembling-kits:}} The agent needs to pick 5 shaped objects (randomly sampled with replacement from a set of 20) and fit them to corresponding silhouettes on a board.
\textit{\textbf{sweeping-piles:}} The agent must push piles of small objects (randomly initialized) into a desired target goal zone on the tabletop marked with green boundaries. 
Detailed task settings and descriptions can be found in Appendix~\ref{raven-tasks}. 

\begin{wrapfigure}[9]{r}{0.68\textwidth}
\vspace{-2em}
\centering
\begin{subfigure}[b]{0.32\textwidth}         \includegraphics[width=\textwidth]{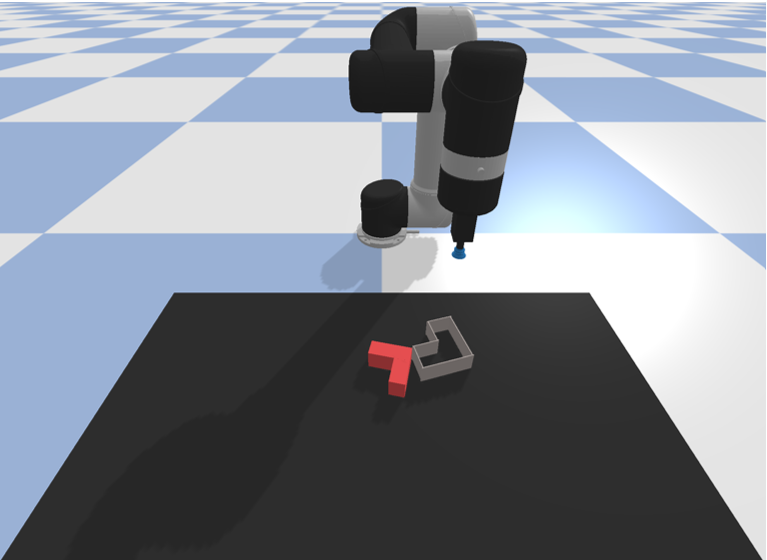}
 %\caption{}
 % \label{}
\end{subfigure}
\hfill
\begin{subfigure}[b]{0.32\textwidth}
\includegraphics[width=\textwidth]{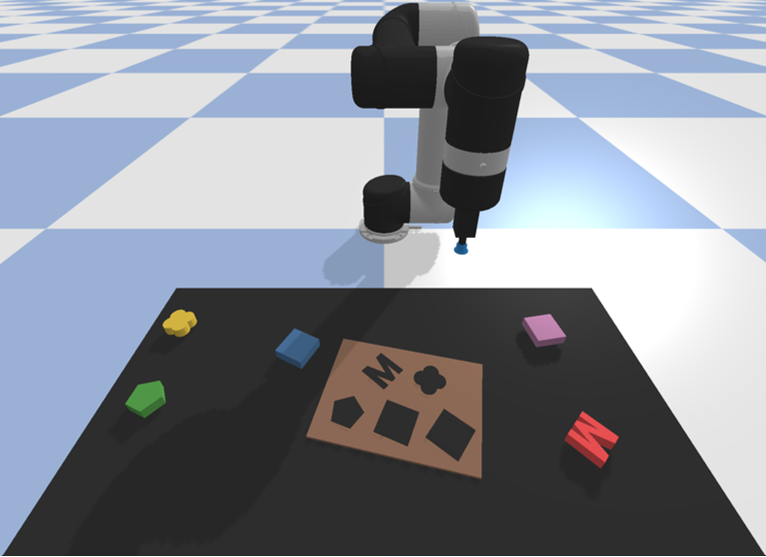}
 %\caption{}
 % \label{}
\end{subfigure}
\hfill
\begin{subfigure}[b]{0.32\textwidth}
\includegraphics[width=\textwidth]{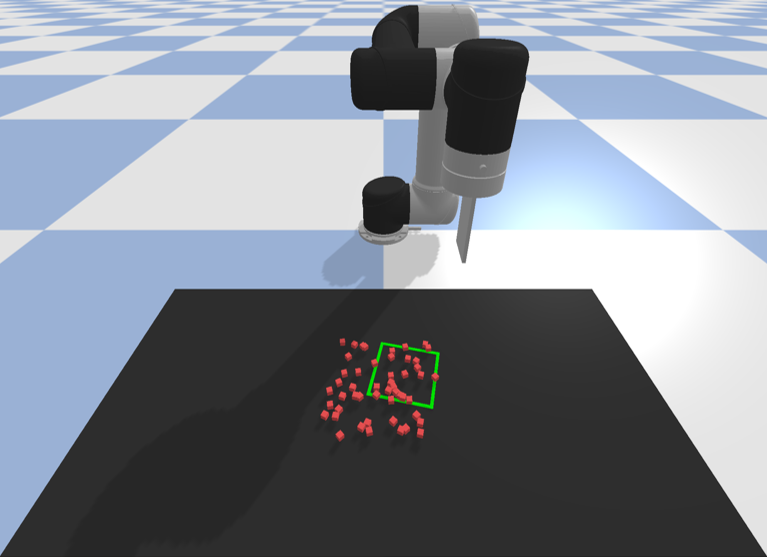}
 %\caption{}
 % \label{}
\end{subfigure}
\caption{2D pick and place task descriptions. Left: Block-insertion task. Center: Assembling kits task. Right: Sweeping-piles task. }

\label{fig:2d_task_des}
    %\vspace{-1cm}
\end{wrapfigure}

\textbf{2D Baselines.} We compare our method against two strong baselines. \textit{\textbf{Transporter Net}}~\citep{zeng2021transporter} implements $\phi$ and $\psi$ with ResNet-43 without equivariant convolutional layers.
It lifts the image crop $c$ to $C_n$  before feeding it to the $\psi$ network. \textit{\textbf{Equivariant Transporter}}~\citep{Huang-RSS-22} may be considered a variation of our proposed method with a $\rho_{0}$-input and $\rho_{\regu}$-output steerable kernel. It is $C_n \times C_n$ equivariant. Since tasks of Ravens Benchmark use a symmetric suction gripper, all outputs of picking angle $\theta$ are equivalent. The picking angle $\theta$ is thus a nuisance variable and our pick network outputs trivial-type features (instead of regular or higher-order irreducible features).

\textbf{Training and Metrics.} We train each model with 10 expert demonstrations and measure the performance with the two reward functions. The low-resolution reward function credits the agent for translation and rotation errors relative to the target pose within $ \tau=1\mathrm{cm}$ and $\omega=15^{\degree}$ and the high-resolution reward function tightens the threshold to $\tau=0.5\mathrm{cm}$ and $\omega=7.5^{\degree}$.

\textbf{Results.} Table~\ref{tab:2d_results} shows the performance of all models trained with 10 demonstrations for 10K steps. All tests are evaluated on 100 unseen configurations. First, \ours{} and Equivariant Transporter realize bi-equivariance and achieve a higher success rate than Transporter Net on single-step tasks and multi-step tasks. Second, as the criteria tightens from a $15 ^{\degree}$ rotation threshold to a $7.5 ^{\degree}$ rotation threshold, \ours{} maintains performance better than others. This indicates that the $\SO(2) \times \SO(2)$ equivariance of \ours{} is more precise.

%-------------------------------------------
% \begin{figure}[t]
%      \centering
%      \begin{subfigure}[b]{0.25\textwidth}
%          \centering
%          \includegraphics[width=0.99\textwidth]{figs/block-insertion.png}
%          %\caption{}
%          \label{}
%      \end{subfigure}
%      \hfill
%      \begin{subfigure}[b]{0.25\textwidth}
%          \centering
%          \includegraphics[width=0.99\textwidth]{figs/assembling-kits.png}
%          %\caption{}
%          \label{}
%      \end{subfigure}
%      \hfill
%      \begin{subfigure}[b]{0.25\textwidth}
%          \centering
%          \includegraphics[width=0.99\textwidth]{figs/sweeping-piles.png}
%          %\caption{}
%          \label{}
%      \end{subfigure}
%      \caption{2D pick and place task descriptions. Left: Block-insertion task. Center: Assembling kits task. Right: Sweeping-piles task. }
%      \label{fig:2d_task_des}
%         %\vspace{-1cm}
% \end{figure}

\begin{table}[t]

    %\scriptsize
    \fontsize{9pt}{8pt}\selectfont
    \setlength\tabcolsep{5pt}
    \centering
    \vspace{-0.5cm}
    \begin{tabular}{lcccccc}
    \toprule
    & \multicolumn{2}{c}{block-insertion-10}  & \multicolumn{2}{c}{assembling-kits-10} & {sweeping-piles-10} \\
    \cmidrule(lr){2-3} \cmidrule(lr){4-5}
        %Model   &  $15^{\degree}$ & $7.5^{\degree}$ &  $15^{\degree}$ & $7.5^{\degree}$ & \\
        Model   &  $15^{\degree}$ & $7.5^{\degree}$ &  $15^{\degree}$ & $7.5^{\degree}$ & \\
    \midrule
    \ours{} (ours) & \textbf{100} & \textbf{100} & \textbf{86.2} & \textbf{78.0} & 99.8 \\
    Equivariant Transporter & \textbf{100} &  98.0 & 85.0 &76.0 & \textbf{100} \\
    Transporter Net &\textbf{100} & 88.0 & 80.0 & 64.0 & 90.4 \\
    \bottomrule
    \end{tabular}
    
    \caption{Performance comparisons on 2D tasks. Success rate (\%) on 100 tests. Results for both low-resolution ($15^{\degree}$) reward and high-resolution reward ($7.5^{\degree}$). \vspace{-20pt}}
    
    \label{tab:2d_results}
\end{table}
%-------------------------------------------
\vspace{-0.2cm}
%-----------------------------------
\section{Conclusion}
In this work, we propose the \ours{} architecture for pick and place problems. Similar to previous pick and place methods \citep{ryu2022equivariant,huang2023leveraging}, \ours{} leverages the coupled $\SE(d)\times \SE(d)$-symmetries inherent in pick-place tasks. \ours{} uses a novel fiber space Fourier transform method to construct a bi-equivarient architecture in a memory efficient manner. The use of Fourier space convolutions allows our architecture to process high resolution features without the need for extensive sampling. We evaluate our proposed architecture on various tasks and empirically demonstrate that our method significantly improves sample efficiency and success rate. Specifically, in the two-dimensional case, \ours{} achieves better performance than existing SOTA methods \citep{huang2023leveraging,zeng2021transporter} on the Ravens benchmark. For three-dimensional pick and place, our method achieves SOTA results on a subset of tasks in the \cite{james2019pyrep} benchmark by a significant (in some cases up to two-hundred percent) margin. This empirical success establishes \ours{} as a powerful architecture for pick-place tasks.
%Additionally, we perform ablation studies which highlight the importance of the bi-equivarience constraint. We show that modifying our architecture in any way that does not satisfy the bi-equivariance constraint leads to degradation in performance.
%\paragraph{Limitations and Future Work.}

One limitation of the formulation of manipulation tasks in this paper is that it relies on open-loop control and does not take path planning and collision awareness into account. Moreover, the current model is limited to a single-task setting, and extending it to a multi-task, language-conditioned equivariant agent is an important future direction. Lastly, this paper considers only robotic manipulation problems, while the bi-equivariant architecture proposed here may have uses outside of robotic manipulation. Specifically, various binding tasks in biochemistry, like rigid protein-ligand interaction \citep{Ganea_2022_Independent}, and point cloud registration~\citep{huang2021comprehensive} have the same bi-equivariant symmetry as pick-place problems.
% \subsection*{Disclosure of Funding
% }
\section*{Acknowledgement}
Haojie Huang, Dian Wang, Xupeng Zhu, Rob Platt and Robin Walters were supported in part by NSF 1724257, NSF 1724191, NSF1763878, NSF 1750649, NSF 2107256, NSF 2134178 and NASA 80NSSC19K1474. Owen Howell was supported by the NSF-GRFP. Dian Wang was also funded by the JPMorgan Chase PhD fellowship.

% \clearpage

\bibliography{reference}
\bibliographystyle{iclr2024_conference}

\clearpage
\appendix
\section{Appendix}
\subsection{Pseudocode} \label{Pseudocode}

\begin{algorithm}
\caption{FourTran inference}
\begin{algorithmic}[1]
    \State Given the observation $o_t$, pick neural network $f_{\pick}$, observation and crop encoder $\phi, \psi$, coarse discrete $\SO(3)$ rotations $G_c$, fine discrete $\SO(3)$ rotations $G_f$
    \If {$\SE(d)$-equivariant pick}
    \State Calculate pick logits in Fourier space: $\Tilde{f}_{\pick} = f_{\pick}(o_t)$
    % \State Calculate pick probability in Fourier: $\Tilde{p}(a_{\pick}|o_t) = \softmax(\Tilde{f}_{\pick})$
    \State Calculate coarse pick distribution in spacial: $p_c(a_{\pick}|o_t) = \mathcal{F}_{G_c}^{-}[\Tilde{f}_{\pick}]$
    \State The coarse pick pose $\Bar{a}^c_{\pick}=(T,R_c)$ can be evaluated by $\Bar{a}^c_{\pick} = \argmax p_c(a_{\pick})$
    \State The fine pick orientation $R_f$ can be evaluated at $T$ by $R_f = \argmax\mathcal{F}_{G_f}^{-}[\Tilde{f}_{\pick}[T]]$
    \State $\Bar{a}_{\pick} = (T, R_f)$
    \EndIf
    \If {$\SE(d) \times \SE(d)$-equivariant place}
    \State Crop the observation at pick location: $c = \text{crop}(o_t, \Bar{a}_{\pick})$
    \State Encode observation and crop: $\phi(o_t), \psi(c)$
    \State Calculate lifted crop encoding in Fourier domain: $\kappa(c) = \mathcal{F}^{+}[\mathcal{L}^\uparrow_{G_c}(\psi(c))]$
    \State Calculate place logits in fourier: $\Tilde{f}_{\place} = \kappa(c) \star \phi(o_t)$
    \State Calculate coarse place distribution in spacial: $p_c(a_{\place}|o_t) = \mathcal{F}^{-}_{G_c}[\Tilde{f}_{\place}]$
    \State The coarse place pose $\Bar{a}^{c}_{\place}=(T,R_c)$ can be evaluate by $\Bar{a}^{c}_{\place} = \argmax p_c(a_{\place})$
     \State The fine place orientation $R_f$ can be evaluated at $T$ by $R_f = \argmax\mathcal{F}_{G_f}^{-}[\Tilde{f}_{\place}[T]]$
    \State $\Bar{a}_{\place} = (T, R_f)$
    \EndIf
    \State return ($\Bar{a}_{\pick}$, $\Bar{a}_{\place}$)
\end{algorithmic} 
\end{algorithm}

\subsection{Proof of Proposition~\ref{proposition1}}
\label{proof_proposition1}
To prove proposition~\ref{proposition1}, we begin with 2 lemmas.

\begin{lemma}
A steerable kernel $K \colon \mathbb{R}^{n} \rightarrow \mathbb{R}^{d_{\mathrm{out}}\times d_{\mathrm{trivial}}}$ satisfies
\begin{equation}
    \beta(g)K(x) =  \rho_{\mathrm{out}}(g^{-1})K(x)
\end{equation}
\end{lemma}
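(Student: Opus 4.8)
The plan is to unwind the two relevant definitions and combine them, since the claim is essentially the steerability constraint specialized to trivial input type. First I would recall that, because $K$ has trivial input type $\rho_{\mathrm{in}} = \rho_0$, the general steerability constraint $K(g\cdot x) = \rho_{\mathrm{out}}(g)K(x)\rho_{\mathrm{in}}(g)^{-1}$ collapses: since $\rho_0(g)^{-1} = 1$ for every $g$, it reduces to
\[
K(\rho_1(g)\, x) = \rho_{\mathrm{out}}(g)\, K(x).
\]

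Next I would recall the definition of the base-space action, $(\beta(g) K)(x) = K(\rho_1(g)^{-1} x)$, which is just $\Ind_{\rho_0}$ acting on the kernel viewed as a field. Evaluating the left-hand side of the claimed identity at an arbitrary point $x$ then gives $(\beta(g) K)(x) = K(\rho_1(g)^{-1} x) = K(\rho_1(g^{-1})\, x)$, where the last equality uses that $\rho_1$ is a representation so $\rho_1(g)^{-1} = \rho_1(g^{-1})$.

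The one piece of bookkeeping to get right is to apply the collapsed steerability constraint at the group element $g^{-1}$ rather than $g$: substituting $g \mapsto g^{-1}$ yields $K(\rho_1(g^{-1})\, x) = \rho_{\mathrm{out}}(g^{-1})\, K(x)$. Chaining these equalities produces $(\beta(g) K)(x) = \rho_{\mathrm{out}}(g^{-1})\, K(x)$, which is exactly the lemma. I do not expect any real obstacle here — the entire content is the reduction of the input representation to the trivial one together with a single $g \mapsto g^{-1}$ substitution — so I would simply present the short chain of equalities, taking care to state that it holds for all $x \in \mathbb{R}^n$ and hence as an identity of functions (kernels).
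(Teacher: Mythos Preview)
Your proposal is correct and follows essentially the same approach as the paper: both arguments unfold the definition $\beta(g)K(x) = K(g^{-1}x)$, apply the steerability constraint with $g^{-1}$ in place of $g$, and then use that the trivial input representation makes $\rho_{\mathrm{in}}(g)^{-1}$ the identity. Your version is slightly more explicit in writing out $\rho_1$ and justifying $\rho_1(g)^{-1}=\rho_1(g^{-1})$, but the substance is identical.
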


\begin{proof}
    Recall that $\rho_{0}(g)$ is an identity mapping. Substituting $\rho_{\mathrm{in}}$ with  $\rho_{0}(g)$ and $g^{-1}$ with $g$ in the steerability constraint
    $K(g\cdot x) = \rho_{\mathrm{out}}(g)K(x)\rho_{\mathrm{in}}(g)^{-1}$ 
    completes the proof.
    \begin{align*}
        \beta(g)K(x) &= K(g^{-1} x)\\
        &=\rho_{\mathrm{out}}(g^{-1})K(x)\rho_{\mathrm{in}}(g)\\
        &=\rho_{\mathrm{out}}(g^{-1})K(x)
    \end{align*}
\end{proof}

\begin{lemma}
    cross correlation satisfies that 
  \begin{align}\label{eqn:lemma2}
    (\beta(g)(K\star f))(\vec{v})= ((\beta(g) K) \star (\beta(g) f))(\vec{v}) 
  \end{align}
  \end{lemma}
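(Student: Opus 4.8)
The plan is to reduce the claim to the change-of-variables formula, exploiting that $\rho_1(g) \in \SO(d)$ acts on the base space $\mathbb{R}^d$ as a measure-preserving orthogonal map. First I would fix a convention for the cross-correlation matching the template-matching usage in \eref{eqn:place-architecture}, namely
\[
  (K \star f)(\vec{v}) = \int_{\mathbb{R}^d} K(\vec{y})\, f(\vec{v} + \vec{y}) \, d\vec{y},
\]
where the $d_{\mathrm{out}}\times d_{\mathrm{in}}$ matrix $K(\vec{y})$ multiplies the $d_{\mathrm{in}}$-vector $f(\vec{v}+\vec{y})$. The important observation is that $\beta(g) = \Ind_{\rho_0}(g)$ acts \emph{only} on the base space and leaves the fiber values untouched, so no fiber representation $\rho_{\mathrm{out}}$ or $\rho_{\mathrm{in}}$ enters here; the lemma is purely about how translational cross-correlation commutes with rotating the underlying domain.

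Next I would expand both sides directly from the definitions of $\star$ and $\beta$. Writing $\rho := \rho_1(g)$ for brevity, the left-hand side is
\[
  (\beta(g)(K\star f))(\vec{v}) = (K \star f)(\rho^{-1}\vec{v}) = \int_{\mathbb{R}^d} K(\vec{y})\, f(\rho^{-1}\vec{v} + \vec{y}) \, d\vec{y},
\]
while unfolding $\beta(g)$ on each factor of the right-hand side gives
\[
  ((\beta(g)K)\star(\beta(g)f))(\vec{v}) = \int_{\mathbb{R}^d} K(\rho^{-1}\vec{y})\, f(\rho^{-1}(\vec{v}+\vec{y})) \, d\vec{y}.
\]
The key step is the substitution $\vec{y} = \rho\,\vec{z}$ in this second integral: since $\rho$ is orthogonal with $\det \rho = 1$, the Jacobian equals $1$ and $d\vec{y} = d\vec{z}$. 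After substituting, $\rho^{-1}\rho\vec{z} = \vec{z}$ and $\rho^{-1}(\vec{v}+\rho\vec{z}) = \rho^{-1}\vec{v}+\vec{z}$, so the right-hand side collapses to $\int_{\mathbb{R}^d} K(\vec{z})\, f(\rho^{-1}\vec{v}+\vec{z})\, d\vec{z}$, which is exactly the left-hand side, completing the proof.

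I do not expect a deep obstacle; the single essential ingredient is the measure-preservation of $\rho_1(g)$, which holds precisely because $\rho_1(g)\in\SO(d)$ and would fail for a general invertible linear map. The one subtlety worth flagging is that in the discretized (voxel or pixel) implementation the integral becomes a sum, and then $\rho_1(g)$ permutes grid points \emph{exactly} only for the finite symmetry subgroups such as $C_n$, $I_{60}$, or $O_{24}$; for an arbitrary rotation the identity then holds only up to discretization error, consistent with the ``approximately steerable'' language used later in Proposition~\ref{proposition2}. For the continuous statement of the lemma the change of variables is exact.
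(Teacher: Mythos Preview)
Your proof is correct and follows essentially the same change-of-variables argument as the paper: expand the cross-correlation, substitute $\vec{y}\mapsto \rho_1(g)\vec{z}$, and use that $\rho_1(g)\in\SO(d)$ is measure-preserving. The only cosmetic differences are that the paper works with the discrete sum over $\mathbb{Z}^2$ and starts from the left-hand side, whereas you work in the continuous setting and start from the right-hand side; your closing remark about the discrete case requiring $g$ to permute lattice points is apt and in fact applies equally to the paper's own re-indexing step.
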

  \noindent
  \begin{proof}
   We evaluate the left-hand side of Equation \ref{eqn:lemma2}:
  \begin{align*}
        \beta(g)(K \star f)(\vec{v}) &=  \sum_{\vec{w}\in \mathbb{Z}^2}f(g^{-1}\vec{v}+\vec{w})K(\vec{w }).
  \end{align*}
  Re-indexing the sum with $\vec{y} = g \vec{w}$,
    \begin{align*}
        &= \sum_{\vec{y}\in \mathbb{Z}^2}f(g^{-1}\vec{v}+g^{-1}\vec{y})K(g^{-1}\vec{y })
    \end{align*}
    is by definition
    \begin{align*}
        &= \sum_{\vec{y}\in \mathbb{Z}^2}(\beta(g)f)(\vec{v}+\vec{y})(\beta(g)K)(\vec{y }) \\
        &= ((\beta(g)K)\star (\beta(g)f))(\vec{v})
    \end{align*}
    as desired.
  \end{proof}

We first prove that if there is a rotation $u$ acting on the observation $o_t$, we have
$\kappa(c)\ast \phi(u \cdot o_t) = \Ind_{\rho_L}(u)\kappa(c)\ast \phi(o_t)$ and the desired place location is changed from $T$ to $\rho_1(u)T$ and the action of orientation is changed from $a$ to $ua$.

First, consider the no-rotation case without the channel-space Fourier transform.
\begin{align*}
    \hat{\kappa}(c) \ast \phi(o_t) &= 
    [\beta(g_1)\psi(c), \beta(g_2)\psi(c), \cdots,\beta(g_m)\psi(c)]\ast \phi(o_t)
\end{align*}
where $\hat{\kappa}(c)=\mathcal{L}^\uparrow(\psi(c))$, each rotated crop feature is cross-correlated with $\phi(o_t)$ independently and the entire output can be considered as an m-channel feature map. Assume that the best place action is found in i-th channel, i.e., $\beta(g_i)\psi(c)$ matches the dense feature map of the placement best and thus $a=g_i$.

\begin{proof}
    Then, consider a rotation $u$ acting on $o_t$ and assume the new best place action is found in the k-th channel

    \begin{align*}
        \hat{\kappa}(c)\ast \phi(u \cdot o_t) & = \hat{\kappa}(c) \ast (\beta(u)\phi(o_t)) \:\: \text{Equiv. of $\phi$}\\
        &=\beta(u)\beta(u^{-1}) \hat{\kappa}(c)\ast \beta(u)\phi(o_t) \\
        & = \beta(u)(\beta(u^{-1})\hat{\kappa}(c) \ast \phi(o_t)) \:\: \text{lemma 2} \\
        &=\beta(g)([\beta(u^{-1})\beta(g_1)\psi(c), \beta(u^{-1})\beta(g_2)\psi(c), \cdots,\beta(u^{-1})\beta(g_m)\psi(c)]\ast \phi(o_t))\\
        &= \beta(g)(\beta(u^{-1})\hat{\kappa}(c)\ast \phi(o_t))\\
        &=\beta(g)(\rho_L(u)\hat{\kappa}(c)\ast \phi(o_t))\:\: \text{lemma 1}\\
        &=\beta(g)\rho_L(u)(\hat{\kappa}(c)\ast \phi(o_t))\\
        &=\Ind_{\rho_L}(u)(\hat{\kappa}(c)\ast \phi(o_t))
        % & = \beta(g)(\rho(g) k(c) \ast \phi(o_t)\:\: \text{lemma 1}\\
        % &= \beta(g)\rho(g) (k(c) \ast \phi(o_t))\:\:
    \end{align*}
Note $\beta(\cdot)$ acting on the base domain while $\rho(g)$ acting on the fiber space. Assume that m is infinite and the best place action is found in the k-th channel, i.e, $\beta(u^{-1})\beta(g_k)\psi(c)$ produces the best match. We have $\beta(u^{-1})\beta(g_k)\psi(c)=\beta(g_i)\psi(c)$ and we can get:
\begin{equation*}
    u^{-1}g_k = g_i = a \:\:\:\text{since }\beta(\cdot) \text{ is a bijective mapping}
\end{equation*}
Multiplying $u$ from the left realizes that $g_k=ua$. It shows that after a rotation u on the crop, the orientation component of the best place action is changed to $g_k=ua$.
    % \begin{align*}
    %     {\kappa}(c)\ast \phi(g \cdot o_t) & = {\kappa}(c) \ast (\beta(g)\phi(o_t)) \:\: \text{Equiv. of $\phi$}\\
    %     &=\beta(g)\beta(g^{-1}) {k}(c)\ast \beta(g)\phi(o_t) \\
    %     & = \beta(g)(\beta(g^{-1})k(c) \ast \phi(o_t)) \:\: \text{lemma 2} \\
    %     & = \beta(g)(\rho(g) k(c) \ast \phi(o_t)\:\: \text{lemma 1}\\
    %     &= \beta(g)\rho(g) (k(c) \ast \phi(o_t))\:\:\\
    % \end{align*}
\end{proof}

Then, we prove that if a rotation $h$ acting on the crop $c$, the desired place action to is changed from $a$ to $a h^{-1}$.
% $\kappa(g \cdot c)\ast \phi(o_t) = {\rho}(g^{-1})\kappa(c)\ast \phi(o_t)$
\begin{proof}
Consider a rotation h acting on the crop and assume the the best place action is found in the j-th channel. The place network can be evaluated as
    \begin{align*}
        \hat{\kappa}(\beta(h) c) \ast \phi(o_t) &= 
        [\beta(g_1)\psi(\beta(h) c), \beta(g_2)\psi(\beta(h) c), \cdots,\beta(g_m)\psi(\beta(h) c)]\ast \phi(o_t)\\
        &=
        [\beta(g_1)\beta(h)\psi(c), \beta(g_2)\beta(h)\psi( c), \cdots,\beta(g_m)\beta(h)\psi(c)] \ast \phi(o_t) \:\: \text{equiv. of}\:\: \psi\\
        &=\rho_R(g^{-1})\hat{\kappa}(c) \ast \phi(o_t)\:\: \text{lemma 1 and definition of }\rho_{R}(g)
    \end{align*}
Assume that m is infinite and the best place action is found in the j-th channel, i.e, $\beta(g_j)\beta(h)\psi(c)$ produces the best match. We have $\beta(g_j)\beta(h)\psi(c)=\beta(g_i)\psi(c)$ and since $\beta(\cdot)$ is a bijective mapping, we can get:
\begin{equation*}
    g_jh = g_i = a
\end{equation*}
Multiplying $h^{-1}$ from the right realizes that $g_j=ah^{-1}$. It shows after a rotation h on the crop, the best place action is changed to $g_j=ah^{-1}$.
\end{proof}

Combining proof 4 and proof 3 finishes the proofs of proposition~\ref{proposition1}.

\subsection{ $SO(3)$-Fourier transform}\label{appendix:section:Fourier_Transform}
Let $f: SO(3) \rightarrow \mathbb{R}$ be a real valued function defined on $SO(3)$. Then, by the Peter-Weyl theorem $f$ can be decomposed as
%, the Fourier transform
\begin{align*}
  \forall g\in SO(3), \quad   f(g) = \sum_{\ell=0}^{\infty} \sum_{k,k'=-\ell}^{\ell} \hat{f}^{\ell}_{kk'} D^{\ell}_{kk'}(g)
\end{align*}
where %$F^{\ell}(f)$ is the $\ell$-th irreducible Fourier transform matrix and
$D^{\ell}$ are the Wigner $D$-matrices. Wigner $D$-matrices of order $\ell$ are irreducible representations of dimension $2\ell + 1$. The Fourier transform over $\SO(3)$ is defined $\mathcal{F}(f) = (\hat{f}^{\ell})_{\ell=0}^\infty$, where each of the $\hat{f}^{\ell}$ is a matrix of size $(2\ell+1)\times(2\ell+1)$. The inverse Fourier transform can be computed using the orthogonality of the Wigner $D$-matrices,
\begin{align*}
  \hat{f}^{\ell}_{kk'} = \int_{ g \in SO(3) }dg\text{ } f(g) D^{\ell}_{kk'}(g^{-1}) 
\end{align*}
where $dg$ denotes the $SO(3)$ Haar measure. In practice, we will truncate the $\ell$ index at some maximum value $\ell_{\max}$.

\subsection{Training Details}
We evaluate our method on both 2D and 3D manipulation pick-place tasks. Specifically, we train a single-task policy for each task with a dataset of $n$ experiment demonstrations. Each demonstration contains one or more observation-action pairs $(o_t,\Tilde{a}_{\pick} \Tilde{a}_{\place})$, where $\Tilde{a}_{\pick}$ denotes the expert pick action and $\Tilde{a}_{\place}$ is the expert place action. We use expert actions to generate one-hot maps as the ground-truth labels for our picking model and placing model. 
Due to the computation load of equivariant convolutional layers on 3D voxel grids, we slightly lift the second constraint of Proposition~\ref{proposition1} by encoding our $\phi$ with a traditional U-net \citep{Ronneberger_2015_U-Net}. U-net with the long skip connection also maintains a certain amount of the $\rho_{0}\mapsto \rho_0$ equivariance.
Both models are trained end to end using a cross-entropy loss.
%There are two stages of training: First the pick network is trained, followed by the place network. The total number of training steps is 5000 with an interval of 500. The total number of epochs is 5000/500=10. The number of demos used is varied depending on task. The total amount of reused data is 500/(number of demos). 
The model is trained using the Adam optimizer with fixed learning rate=$1e^{-4}$. We report the training time and GPU memory requirement of 3D \ours{} in Table~\ref{table:Model Comparisons}.

\subsection{Ablation Studies}
We perform two ablation studies to explore the functionality of our proposed architecture. 
We first replace the equivariant U-net of $\psi$ with a traditional U-net. This modification reduces the architecture to satisfy the first constraint in Proposition~\ref{proposition1}. The second ablated version of our model is that we remove the lifting and Fourier transform and directly generate the irreducible features for each element, i.e., the model is forced to learn the third constraint of Proposition~\ref{proposition1} without prior knowledge. Table~\ref{tab:abalation} shows the performances of all ablations. Comparing the first row with the second row, we find that the results are consistent. We hypothesize that the reasons are: 1) traditional U-net with the skip connections also captures the $\mathrm{trivial}\mapsto \mathrm{trival}$ equivariance and the equivariant constraints of the equivariant layers with $O_{24}$ group in \ours{} is not strong. 2) Data augmentation is applied to both models to learn the equivariance.  Comparing the first row to the third row, the ablated model attempting to learn the coefficients of the basis function is not as well as our proposal to generate a dynamic steerable kernel. On the other hand, given the fact that lifting with a fixed number of $\SO(3)$ rotations approximates the steerability of a 3D kernel, this ablated version avoids lifting to reduce the computation load. However, without representing $\SO(3)$ rotations in Fourier space, it is way more expensive to evaluate a fine discrete $\SO(3)$ distribution.

\begin{table}[h]
    \centering
    \setlength\tabcolsep{4pt}
    \begin{tabular}{c c c c c}
    \toprule
         Model & stack-blocks & stack-wine & phone-on-base & put-plate\\
         \midrule
    \textit{\ours{} (ours)} & $\mathbf{76}$  & $\mathbf{100}$ & $\mathbf{96}$ & $\mathbf{32}$ \\
        \midrule
    \textit{Fourier Transporter w.o equiv. of $\psi$}     & $\mathbf{76}$ & 96 & $\mathbf{96}$ & 24\\
    \textit{Fourier Transporter w.o lifting} &72 & 88 & $\mathbf{96}$ & 28\\
    \bottomrule
    \end{tabular}
    \caption{Ablation Study: Performance of three variants of \ours{} on different RLbench tasks \cite{james2019pyrep} Each model is trained with 5 demonstrations and evaluated on 25 tests. Best performances are highlighted in bold. }
    \label{tab:abalation}
\end{table}

\subsection{Model Comparisons}

\label{Model Comparisons}

We train a single-task agent for each 3D baseline on the five tasks shown in Figure~\ref{fig:3d_task_des} with the same settings presented in their paper. 
Note that we have $\{$pre-pick, pick, post-pick$\}$ and $\{$pre-place, place, post-place$\}$ where the pre-action and post-action are defined as relative to the pick and place action while the baseline predicts the action for the next keyframe without a clear line between pick and place. 

\begin{table}[h]
\begin{tabular}{|l|l|l|l|l|}
\hline
Model             & Parameters (M) & Memory(Gb) & Training Time (secs/sgd step) \\ \hline
\ours{}-pick & ~3.1  M     & ~8.5                        & 1.5                           \\ \hline
\ours{}-place & ~1.6 M       & ~13.6                       & 1.6                           \\ \hline
RVT               & ~36 M        & 12.5                         & 0.46                           \\ \hline
PerAct            & ~33 M        & 12                          & 1.5                           \\ \hline
C2FARM        & ~3.6 M                          & 2            &0.07               \\ \hline 

\end{tabular}
\caption{Comparison of \ours{} (ours) architecture and existing pick-place methods. Tests were performed on NVIDIA 3090 GPU.}
\label{table:Model Comparisons}
\end{table}

Table \ref{table:Model Comparisons} compares \ours{} with the baseline architectures. Note that our model has about the same number of parameters as C2FARM \cite{james2019rlbench} significantly fewer parameters than the transformer-based methods PerAct\cite{jaegle2021perceiver} (33 M) and RVT\cite{goyal2023rvt} (36 M). The equivariant 3D convolution implemented in $\ours{}$ requires a relatively large GPU memory.  However, $\ours{}$ can produce the action distribution over the entire action space directly instead of a set of discrete rotations along each axis. As a result, $\ours{}$ can easily query the backup actions if the action selected cannot be reached.

%Out model uses a large amount of memory. However, there exist numerous methods 

%The memory usage of all architectures is roughly equal. For closed loop behavior cloning, inference time is not an important quantity. However, having a low inference time is important for pick-plane in dynamics environments. The training time for all models is roughly equal.

\subsection{Detailed 3D task settings and Metrics}
\label{rlbench-tasks}

The RLbench~\cite{james2019rlbench} is implemented in CoppelaSim~\citep{rohmer2013v} and interfaced through  PyRep~\citep{james2019pyrep}. All experiments of RLbencg use a Franka Panda robot with a parallel gripper. Our input observations are captured
from four RGB-D cameras. The captured point could is parameterized as $72 \times 96 \times 56$ voxel grid and each voxel represents a $0.94$cm$^3$ cube in our settings.

Figure~\ref{fig:exp_rot_dis} shows the expert $\SO(3)$ rotation action distributions in stack-wine and put-plate tasks. The agent needs to reason about the $\SO(3)$ rotation space to finish the tasks. For more information of plotting elements of $SO(3)$, please see \cite{Murphy_2022_implicitpdf}.
% {Note that the model outputs a distribution that is approximately symmetric about a $180^{\degree}$ rotation. This is because the gripper has a $C_{2}$ symmetry so a rotation about the gripper axis by and angle of $180^{\degree}$ corresponds to the same physical state. The model is able to learn this symmetry from data.   }

\begin{figure}[t]
% \captionsetup{font=scriptsize}
     \centering
     \begin{subfigure}[b]{0.48\textwidth}
         \centering
         \includegraphics[width=1.00\textwidth]{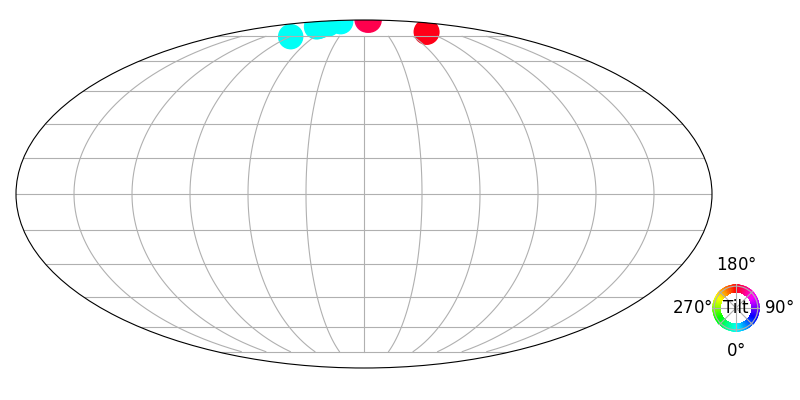}
     \end{subfigure}
     \begin{subfigure}[b]{0.48\textwidth}
         \centering
         \includegraphics[width=1.00\textwidth]{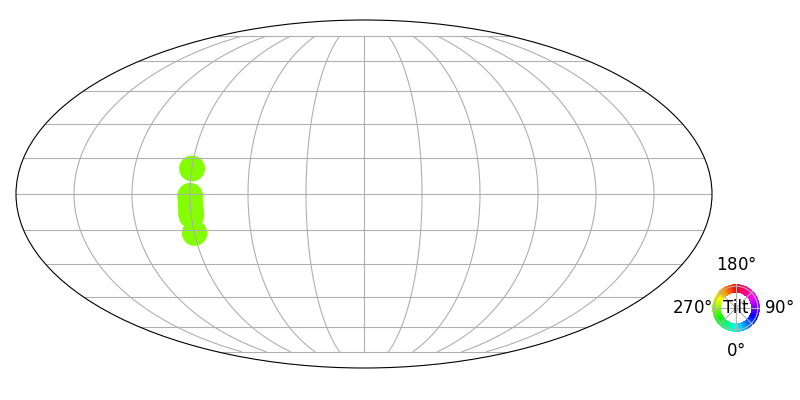}
     \end{subfigure} \\
     \begin{subfigure}[b]{0.48\textwidth}
         \centering
         \includegraphics[width=1.00\textwidth]{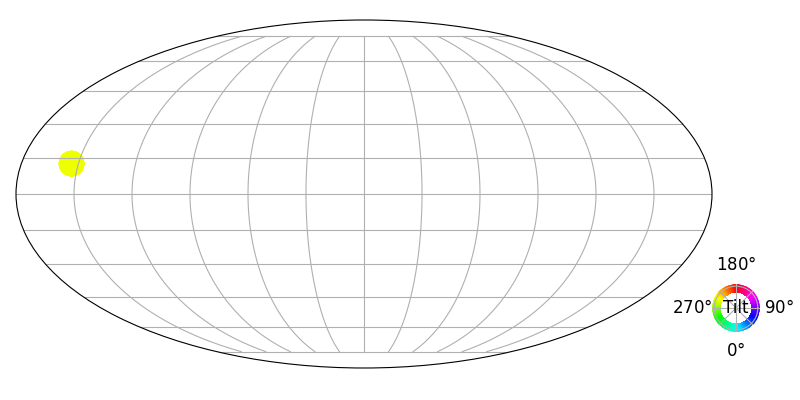}
     \end{subfigure}
     \begin{subfigure}[b]{0.48\textwidth}
         \centering
         \includegraphics[width=0.99\textwidth]{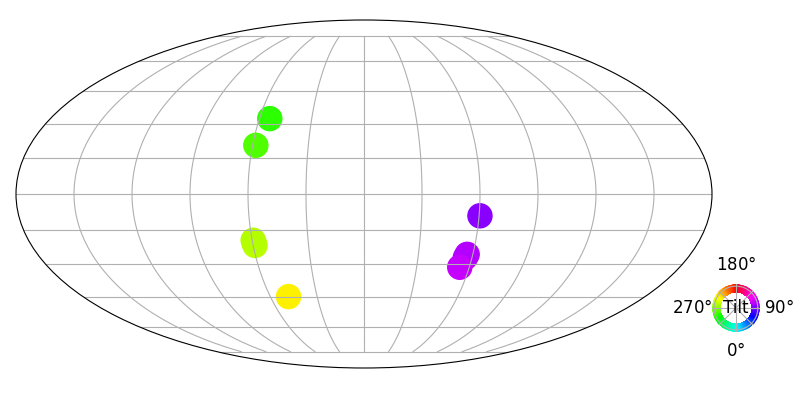}
     \end{subfigure}
     \caption{Visualization of expert $\SO(3)$ actions from 10 demonstrations. First column: expert pick action. Second column: expert place action. First row: stack-wine. Second row: put-plate. The orientation visualization follows ``YXY" convention. For more detail on plot formatting, please see \cite{Murphy_2022_implicitpdf} }
     \label{fig:exp_rot_dis}
        %\vspace{-1cm}
\end{figure}

\subsection{Detailed 2D task settings and metrics}
\label{raven-tasks}
At the beginning of each episode, the poses of objects and placements in each task are randomly sampled in the workspace without collision. The visual observation $o_t$ is a top-down projection of the workspace with 3 simulated RGB-D cameras pointing towards the workspace. Our pixel resolution is $320 \times 160 $ for the $1$m $\times$ $0.5$m workspace.  We measure performance in the same way as it was measured in Transporter Net~\cite{zeng2021transporter} -- using a metric in the range of 0 (failure) to 100 (success). Partial scores are assigned to multiple-action tasks. For example, in the assembling kit task where the agent needs to assemble 5 objects, each successful rearrangement is credited with a score of 0.2. We report the highest validation performance during training, averaging over 100 unseen tests for each task.

\section{Proof of Proposition~\ref{proposition2}}
\label{proof_proposition2}
\subsection{The $I_{60}$-Lifting Map is A Steerable Kernel with Trivial Input Type and Regular Output Type}

The lifting map $\mathcal{L}^\uparrow(f)$ generates a $I_{60}$-steerable kernel with regular-type output. To see this, note that, by the definition of $\mathcal{L}^\uparrow$,
\begin{align*}
\forall x \in \mathbb{R}^{d},  \quad \mathcal{L}^\uparrow( f( x) ) =  \{ g_{1} \cdot f(x) , g_{2} \cdot f(x) , ... , g_{m} \cdot f(x) \}
\end{align*}
where each $g_{i}$ are elements of $I_{60}$. Thus, $\forall g \in I_{60}$, we have that
\begin{align*}
\mathcal{L}^\uparrow( f( g \cdot x) ) = \{ g_{1} \cdot f(gx) , g_{2} \cdot f(gx) , ... , g_{m} \cdot f(gx) \} = \{ f(g^{-1}_{1} g x) , f(g^{-1}_{2}gx) , ... , f( g^{-1}_{m}gx) \} 
\end{align*}

Now, let $g$ be a fixed element of the icosahedral group. Left multiplication by an element of $g$ is a group homeomorphism of $I_{60}$. Let us define $g_{g(i)}$ to be the element of the sampled set $g_{i}$ so that
\begin{align*}
g_{g(i)} =  g \cdot g_{i}
\end{align*}
Let $1$ be the identity element on $SO(d)$. Then, note that 
\begin{align*}
g_{1(i)} = g_{i}
\end{align*}

The map $g_{g(i)}$ satisfies an additional composition property. Let $g,g'$ be elements of $SO(d)$, then, 
\begin{align*}
g_{(gg')(i)} = g_{ g( g'(i) ) } 
\end{align*}
holds exactly. Thus, the map $g( \cdot ) \in $ The expression for $\mathcal{L}^\uparrow(f)$ can be rewritten as
\begin{align*}
\mathcal{L}^\uparrow( f(g \cdot x) ) = \rho(g) \{ f(g^{-1}_{1}x) , f(g^{-1}_{2}x) ... f( g^{-1}_{m}x) \} = \rho(g) \mathcal{L}^\uparrow( f(x) )
\end{align*}
where the matrix $\rho$ has elements given by
\begin{align*}
\rho(g)_{ij} = \delta_{ig(j)}
\end{align*}
Using $g_{1(i)} = g_{i}$, we have that the matrix $\rho$ satisfies,
\begin{align*}
\rho( 1 ) = \mathbb{I}_{60}
\end{align*}
where $\mathbb{I}_{60}$ is the identity matrix in $60$ dimensions. Using the identity $g_{(gg')(i)} = g_{ g( g'(i) ) } $ the matrix $\rho$ satisfies the relation
\begin{align*}
 \rho(g) \rho(g') = \rho(gg')
\end{align*}
which is the definition of a group representation. Thus, the matrix $\rho$ is exactly the permutation representation of $I_{60}$. Thus, the lifting operator $\mathcal{L}^\uparrow(f)$ is an $I_{60}$-steerable kernel with trivial input type and regular output type.

\subsection{Fiber Space Fourier Transform}

Let us suppose that the map $\mathcal{L}^\uparrow(f)$ is exactly a $I_{60}$-steerable kernel with trivial input type and regular output type. We can then compute the fiber space $SO(3)$-Fourier transform of $\mathcal{L}^\uparrow(f)$
\begin{align*}
\rho_{out}(g) = \bigoplus_{\ell=0}^{ \ell_{\max} } m_{\ell} D^{ \ell }(g)
\end{align*}
where $D^{\ell}$ is the $\ell$-th irreducible of $SO(3)$.

Let $F^{+}$ be the Fourier transform in the fiber space given by
\begin{align*}
F^{+}( \mathcal{L}^\uparrow(f) ) = \sum_{\ell=0}^{ \ell_{max} } C^{\ell}(x) D^{\ell}
\end{align*}
where the irreducible coefficients are given by the fiber space Fourier transform
\begin{align*}
C^{\ell}(x) = \int_{R \in SO(3) } dR \text{ } D^{\ell}(R^{-1}) \rho(R) \mathcal{L}^\uparrow(f)( R^{-1}x)
\end{align*}

\subsection{The Stochastically Sampled Lifting Map is Approximately A $SO(3)$-Steerable Kernel}
\label{Stochastically Sampled Lifting Map}
The stochastically sampled lifting map $\mathcal{L}^\uparrow(f)$ approximately generates a steerable kernel with regular-type output. To see this, note that, by the definition of $\mathcal{L}^\uparrow$,
\begin{align*}
\forall x \in \mathbb{R}^{n},  \quad \mathcal{L}^\uparrow( f( x) ) =  \{ g_{1} \cdot f(x) , g_{2} \cdot f(x) , ... , g_{m} \cdot f(x) \}
\end{align*}
where each $g_{i}$ is sampled iid from $SO(d)$.
Thus, $\forall g \in SO(d)$, we have that
\begin{align*}
\mathcal{L}^\uparrow( f( g \cdot x) ) = \{ g_{1} \cdot f(gx) , g_{2} \cdot f(gx) , ... , g_{m} \cdot f(gx) \} = \{ f(g g^{-1}_{1} x) , f(g^{-1}_{2}gx) , ... , f( g^{-1}_{m}gx) \} 
\end{align*}

Now, let $g$ be a fixed element of $SO(d)$. Left multiplication by an element of $g$ is a group homeomorphism of $SO(d)$. Let us define $g_{g(i)}$ to be the closest element of the sampled set $g_{i}$ so that
\begin{align*}
g_{g(i)} = \text{argmax}_{ j=1,2,...m }|| g \cdot g_{i} - g_{j}  ||
\end{align*}
where the norm is the geodesic distance on $SO(d)$. Let us assume that we work in the regime where the number of samples $m$ is large and
\begin{align*}
\text{max}_{ j=1,2,...m }|| g \cdot g_{i} - g_{j}  || \leq \frac{\epsilon}{m}
\end{align*}
For iid uniform $g_{i}$, this property holds generically for large values of $m$. This can be proved rigorously using concentration of measure phenomena.

Let $1$ be the identity element on $SO(d)$. Then, note that 
\begin{align*}
g_{1(i)} = \text{argmax}_{ j=1,2,...m }|| g_{i} - g_{j}  || = g_{i}
\end{align*}

The map $g_{g(i)}$ satisfies an additional composition property. Let $g,g'$ be elements of $SO(d)$, then, 
\begin{align*}
g_{(gg')(i)} = \text{argmax}_{ j=1,2,...m }|| (gg') \cdot g_{i} - g_{j}  || = \text{argmax}_{ j=1,2,...m }|| g (g' \cdot g_{i}) - g_{j}  ||
\end{align*}
Thus,
\begin{align*}
g_{(gg')(i)} = g_{ g( g'(i) ) } 
\end{align*}
holds approximately. 

We may decompose the expression for $\mathcal{L}^\uparrow(f)(gx)$ as
\begin{align*}
\mathcal{L}^\uparrow( f(g \cdot x) ) = \{ f(g^{-1}_{g(1)}x) , f(g^{-1}_{g(2)}x) ... f( g^{-1}_{g(m)}x) \} + \textbf{error} 
\end{align*}
where the error term can be written as 
\begin{align*}
\textbf{error} = \{ f(g^{-1}_{1}gx) - f(g^{-1}_{g(1)}x) , f(g^{-1}_{2}gx) - f(g^{-1}_{g(2)}x) , ... , f(g^{-1}_{m}gx) -f( g^{-1}_{g(m)}x) \}
\end{align*}
if we assume that the function $f$ is $L$-Lipschitz continuous, then,
\begin{align*}
|| f(g^{-1}_{k}gx) - f(g^{-1}_{g(k)}x) || \leq L || g^{-1}_{k}g - g^{-1}_{g(k)} || \leq \frac{L\epsilon}{m}
\end{align*}
and so as long as $m$ is large, the error term is small (roughly $\mathcal{O}( \frac{L}{m} ) )$. Ignoring the error term, the expression for $\mathcal{L}^\uparrow(f)$ can be rewritten as
\begin{align*}
\mathcal{L}^\uparrow( f(g \cdot x) ) = \rho(g) \{ f(g^{-1}_{1}x) , f(g^{-1}_{2}x) ... f( g^{-1}_{m}x) \} = \rho(g) \mathcal{L}^\uparrow( f(x) )
\end{align*}
where the matrix $\rho$ (which implicitly depends on the sampled $g_{i}$) has elements given by
\begin{align*}
\rho(g)_{ij} = \delta_{i g(j) }
\end{align*}
Using $g_{1(i)} = g_{i}$, we have that the matrix $\rho$ satisfies,
\begin{align*}
\rho( 1 ) = \mathbb{I}_{m}
\end{align*}
where $\mathbb{I}_{m}$ is the $m$-dimensional identify matrix. Using the identity $g_{(gg')(i)} = g_{ g( g'(i) ) } $ the matrix $\rho$ approximately satisfies the relation
\begin{align*}
 \rho(g) \rho(g') = \rho(gg')
\end{align*}
which is the definition of a group representation. Thus, up to approximation, the matrix $\rho$ is the permutation representation of $SO(d)$. The $g_{i}$ are sampled at some numerical resolution with corresponding bandwidth $\ell_{max}$. Ergo, for large $m$, the matrix $\rho$ is a good approximation to the $SO(d)$ permutation representation at bandwidth $\ell_{max}$.

\section{Mathematical Background}\label{Appendix:Mathamatical_Background}

We establish some notations and review some elements of group theory and representation theory. For a comprehensive review of representation theory, please see \cite{Zee_2016_Group,Ceccherini_2008_Harmonic}.

\subsection{Group Theory}
At a high level, a group is the mathematical description of a symmetry. Formally, a group $G$ is a non-empty set combined with a associative binary operation $\cdot : G \times G \rightarrow G$ that satisfies the following properties
\begin{align*}
& \text{existence of identity: } e \in G, \text{ s.t. } \forall g\in G, \enspace e \cdot g = g \cdot e = g  \\
& \text{existence of inverse: } \forall g \in G, \exists g^{-1} \in G, \enspace g \cdot g^{-1} = g^{-1} \cdot g = e  \\
\end{align*}
The identity element of any group $G$ will be denoted as $e$. Note that the set consisting of just the identity element $e$ is a group.

\subsection{Representation Theory}
Let $V$ be a vector space over the field $\mathbb{C}$. A representation $(\rho , V)$ of a group $G$ consists of $V$ and a group homomorphism $\rho : G \rightarrow \text{Hom}[V,V] $. By definition, the $\rho$ map satisfies
\begin{align*}
\forall g,g' \in G, \enspace \forall v \in V, \enspace \rho(g)\rho(g')v = \rho(gg') v 
\end{align*}
%Heuristically, a group can be thought of as the embedding of an group (which is an abstract mathematical object) into a set of matrices. 
Two representations $(\rho,V)$ and $(\sigma,W)$ are said to be equivalent representations if there exists an invertable matrix $U$
\begin{align*}
\forall g\in G, \quad U\rho(g) = \sigma(g)U
\end{align*}
The linear map $U$ is said to be a $G$-intertwiner of the $(\rho,V)$ and $(\sigma , W)$ representations. A representation is said to be reducible if it breaks into a direct sum of smaller representations. Specifically, a unitary representation $\rho$ is reducible if there exists an unitary matrix $U$ such that
\begin{align*}
\forall g\in G, \quad \rho(g) = U [ \bigoplus_{i=1}^{k} \sigma_{i}(g) ] U^{\dagger}
\end{align*}
where $k\geq2$ and $\sigma_{i}$ are smaller irreducible representations of $G$. The set of all non-equivalent representations of a group $G$ will be denoted as $\hat{G}$. All representations of compact groups $G$ can be decomposed into direct sums of irreducible representations. Specifically, if $(\sigma , V)$ is a $G$-representation, 
\begin{align*}
(\sigma , V) = U[  \bigoplus_{ \rho \in \hat{G} } m^{\rho}_{\sigma}(\rho , V_{\rho} )  ] U^{\dagger}
\end{align*}
where $U$ is a unitary matrix and the integers $m^{\rho}_{\sigma}$ denote the number of copies of the irreducible $(\rho , V_{\rho} )$ in the representation $(\sigma , V)$.

\subsection{Wigner $D$-Matrices}
The fundamental representation of  the rotation group in three-dimensions is given by 
\begin{align*}
SO(3) = \{ \enspace R  \enspace | \enspace R\in \mathbb{R}^{3\times 3} , \enspace R^{T}R = \mathbb{I}_{3} , \enspace \text{det}(R) = 1 \enspace \}.
\end{align*}
It should be noted that a group is an abstract mathematical object and that the standard parameterization is a \emph{choice}. There are multiple non-equivalent representations of the group $SO(3)$. The irreducible representations of $\SO(3)$ are known as Wigner $D$-matrices $(D^{\ell} , V^{\ell})$. The Wigner $D$-matrix of order $\ell$ is a real representation of dimension $(2 \ell +1) \times (2 \ell +1)$. Although Wigner $D$-matrices are difficult to visualize, the $\ell = 1$ representation is just the standard $3 \times 3$ matrix representation of $SO(3)$.

\subsection{Peter-Weyl Theorem and Fiber Space Fourier Transform}\label{Suppl_Peter-Weyl}
The Peter-Weyl theorem \citep{Ceccherini_2008_Harmonic} states that all representations of compact groups can be decomposed into a countably infinite sets of irreducible representations. Consider the set of functions
\begin{align*}
\mathcal{F} = \{  \enspace f \enspace | \enspace f: G \rightarrow \mathbb{C} \enspace \}
\end{align*}
of all complex valued function defined on $G$. The set $\mathcal{F}$ forms a vector space over the field $\mathbb{C}$. The group $G$ acts on vector space $\mathcal{F}$. Specifically, define the group action $\lambda: G \times \mathcal{F} \rightarrow \mathcal{F}$ as
\begin{align*}
\forall f\in \mathcal{F},\enspace \forall g,g'\in G, \quad (\lambda_{g} \cdot  f)(g') = f(g^{-1}g) \in \mathcal{F}
\end{align*}
The action satisfies $\lambda_{g}\lambda_{g'}=\lambda_{gg'}$ and is a group homeomorphism. The left-regular representation of a group is defined as $ (\lambda, \mathcal{F})$. The Peter-Weyl theorem \cite{Ceccherini_2008_Harmonic} states that
\begin{align*}
(\lambda, \mathcal{F}) = U[ \bigoplus_{\rho \in \hat{G}} d_{\rho}(\rho , V_{\rho} )  ] U^{\dagger}
\end{align*}
where $U$ is the unitary matrix. Thus, the left-regular representation decomposes into $d_{\rho}$ copies of each $(\rho , V_{\rho})$ irreducible. In other words, the Peter-Weyl theorem states that matrix elements of irreducible $G$-representations form an orthonormal base of the space of square-integrable 
functions on $G$.

\subsection{Irreducible Representation Orthogonality Relations}

Matrix elements of irreducible representations satisfy a set of orthogonality relations \cite{Zee_2016_Group}. Specifically, let $\rho$ and $\sigma$ be irreducible representations of the group $G$. Then,
\begin{align*}
\sum_{g \in G } \rho_{kk'}(g)\sigma(g)_{nn'}^{\dagger} = \frac{|G|}{d_{\rho}} \delta_{\rho,\sigma} \delta_{kn} \delta_{k'n'}
\end{align*}
where $|G|$ is the cardinality of the group.
The orthogonality relations of different irreducible representations is an analogue of harmonics of different frequencies in Fourier analysis. As an example, consider the case with $G = SO(2)$. Irreducible representations of $SO(2)$ are all one dimensional and labeled by an integer. They are all of the form
\begin{align*}
\rho_{k}( \phi ) = \exp(ik\phi)
\end{align*}
The orthogonality relations applied to $G= SO(2)$ then state that
\begin{align*}
\int_{ 0  }^{2\pi} d\phi \rho_{k}(\phi)\rho(\phi)_{k'}^{\dagger} = \int_{ 0  }^{2\pi} d\phi  \exp(i (k - k') \phi ) = 2\pi \delta_{kk'}
\end{align*}
which is the standard orthogonality relation for different Fourier harmonics.

\subsection{Induced Representations}\label{Induced_reps}

The induced representation is a way to construct representations of a larger group $G$ out of representations of a subgroup $H \subseteq G$. Let $(\rho , V)$ be a representation of $H$. The induced representation of $(\rho , V)$ from $H$ to $G$ is denoted as $\Ind_{H}^{G}[ (\rho , V) ]$. Define the space of functions
\begin{align*}
\mathcal{F} = \{ \enspace f \enspace | \enspace f : G \rightarrow V, \enspace \forall h\in H, \enspace f(gh) = \rho(h^{-1}) f(g) \enspace \}
\end{align*}
Then the induced representation is defined as $ ( \pi , \mathcal{F}  ) = \Ind_{H}^{G}[ (\rho , V) ]$
where the induced action $\pi$ acts on the function space $\mathcal{F}$ via
\begin{align*}
    \forall g,g' \in G, \enspace \forall f \in \mathcal{F}, \quad (\pi(g)\cdot f)(g') = f(g^{-1} g')
\end{align*}
The induced representation was originally used in \cite{cohensteerable} to design networks that are equivariant with respect to both rotations and translations. Induced representations can be used to change the underling group equivariance \cite{cesa2021program, weiler2019general, howell2023equivariant}. Specifically, induced representations can be used to design $SE(3)$-equivariant maps from $SO(3)$-equivariant maps.

\end{document}